\newcommand\defeq{:=}
\algnewcommand\algorithmicinput{\textbf{Input:}}
\algnewcommand\INPUT{\item[\algorithmicinput]}
\algnewcommand\algorithmicoutput{\textbf{Output:}}
\algnewcommand\OUTPUT{\item[\algorithmicoutput]}
\DeclareMathOperator*{\argmin}{arg\;min}
\newcommand\vect[1]{\mathbf #1}
\newcommand{\vd}{\vect{d}}
\newcommand{\vq}{\vect{q}}
\newcommand{\vu}{\vect{u}}  
\newcommand{\vv}{\vect{v}}  
\newcommand{\vx}{\vect{x}}  
\newcommand{\vy}{\vect{y}}  
\newcommand{\vz}{\vect{z}}
\newcommand{\mD}{\mathbf{D}}
\newcommand{\samplesize}{N}
\newcommand\gdop[1]{\mathcal{T}^{(#1)}}
\newcommand\gdopopt{\mathcal{T}}
\newcommand\stackedgdop{\overline{\mathcal{T}}}
\newcommand{\vecdim}{n}
\newcommand{\dataset}{\mathcal{X}}
\newcommand\funclass[2]{\mathcal{S}_{\vecdim}^{#1,#2}}
\newtheorem{theorem}{Theorem}
\newtheorem{lemma}[theorem]{Lemma}
\title{A Fixed-Point of View on Gradient Methods for Big Data}
\name{Alexander Jung}
\address{\normalsize Department of Computer Science, Aalto University, Finland; firstname.lastname(at)aalto.fi
}
\begin{document}
	\maketitle
\begin{abstract}
Interpreting gradient methods as fixed-point iterations, we provide a detailed analysis 
of those methods for minimizing convex objective functions. Due to their conceptual and 
algorithmic simplicity, gradient methods are widely used in machine learning for massive data sets (big data). 
In particular, stochastic gradient methods are considered the de-facto standard for training 
deep neural networks. Studying gradient methods within the realm of fixed-point theory 
provides us with powerful tools to analyze their convergence properties. In particular, gradient methods using inexact 
or noisy gradients, such as stochastic gradient descent, can be studied conveniently using well-known 
results on inexact fixed-point iterations. Moreover, as we demonstrate in this paper, the fixed-point 
approach allows an elegant derivation of accelerations for basic gradient methods. In particular, 
we will show how gradient descent can be accelerated by a fixed-point preserving transformation 
of an operator associated with the objective function. 

\end{abstract}

\begin{keywords} convex optimization, fixed point theory, big data, machine learning, 
contraction mapping, gradient descent, heavy balls
\end{keywords} 

\section{Introduction}
 \label{sec_intro}
One of the main recent trends within machine learning and data analytics 
using massive data sets is to leverage the inferential strength of the vast amounts of data 
by using relatively simple, but fast, optimization methods as algorithmic primitives \cite{Bottou2008}. 
Many of these optimization methods are modifications of the basic gradient descent (GD) method. 
Indeed, computationally more heavy approaches, such as interior point methods, 
are often infeasible for a given limited computational budget \cite{Cevher14}. 

Moreover, the rise of deep learning has brought a significant boost for the interest in gradient methods. 
Indeed, a major insight within the theory of deep learning is that for typical high-dimensional models, 
e.g., those represented by deep neural networks, most of the local minima of the cost function 
(e.g., the empirical loss or training error) are reasonably close (in terms of objective value) 
to the global optimum \cite{Goodfellow-et-al-2016}. 
These local minima can be found efficiently by gradient methods such as stochastic gradient descent (SGD), 
which is considered the de-facto standard algorithmic primitive for training 
deep neural networks \cite{Goodfellow-et-al-2016}. 

This paper elaborates on the interpretation of some basic gradient methods such as GD and its 
variants as fixed-point iterations. These fixed-point iterations are obtained for operators associated 
with the convex objective function. Emphasizing the connection to fixed-point theory unleashes some 
powerful tools, e.g., on the acceleration of fixed-point iterations \cite{AndersonAccFixedPoints} 
or inexact fixed-point iterations \cite{InexactFixedPoint,Alfeld82}, for the analysis and construction of 
convex optimization methods. 

In particular, we detail how the convergence of the basic GD iterations can be understood 
from the contraction properties of a specific operator which is associated naturally 
with a differentiable objective function. Moreover, we work out in some detail how the basic GD 
method can be accelerated by modifying the operator underlying GD in a way that preserves its 
fixed-points but decreases the contraction factor which implies faster convergence by 
the contraction mapping theorem. 

{\bf Outline.}
We discuss the basic problem of minimizing convex functions in 
Section \ref{sec_cvx_functions}. We then derive GD, which is a particular first order 
method, as a fixed-point iteration in Section \ref{sec_gradient_descent}. In Section \ref{sec_FOM}, 
we introduce one of the most widely used computational models for convex optimization methods, i.e., 
the model of first order methods. In order to assess the efficiency of GD, which is a particular instance 
of a first order method, we present in Section \ref{sec_lower_bound} a lower bound on the number of 
iterations required by any first order method to reach a given sub-optimality. Using the insight 
provided from the fixed-point interpretation we show how to obtain an accelerated variant of GD 
in Section \ref{sec_AGD}, which turns out to be optimal in terms of convergence rate. 

{\bf Notation.} 
The set of natural numbers is denoted $\mathbb{N} \defeq \{1,2,\ldots \}$. 
Given a vector $\vx=(x_{1},\ldots,x_{\vecdim})^{T} \in \mathbb{C}^{\vecdim}$, we denote its $l$th entry by $x_{l}$. 
The (hermitian) transpose and trace of a square matrix $\mathbf{A} \in \mathbb{C}^{\vecdim \times \vecdim}$ 
are denoted ($\mathbf{A}^{H}$) $\mathbf{A}^{T}$ and ${\rm tr} \{ \mathbf{A} \}$, respectively. The Euclidian norm of a vector $\vx$ is 
denoted $\| \vx \| \defeq \sqrt{ \vx^{H}\vx}$. The spectral norm of a matrix $\mathbf{M}$ 
is denoted $\| \mathbf{M} \| \defeq \max\limits_{\| \vx \|=1} \| \mathbf{M} \vx \|$. The 
spectral decomposition of a positive semidefinite (psd) matrix $\mathbf{Q}\!\in\!\mathbb{C}^{\vecdim \times \vecdim}$ is 
$\mathbf{Q}\!=\!\mathbf{U} {\bf \Lambda} \mathbf{U}^{H}$ with matrix $\mathbf{U}\!=\!\big(\vu^{(1)},\ldots,\vu^{(\vecdim)}\big)$ 
whose columns are the orthonormal eigenvectors $\mathbf{u}^{(i)}\!\in\!\mathbb{C}^{\vecdim}$ of $\mathbf{Q}$ and the 
diagonal matrix ${\bf \Lambda}$ containing the eigenvalues $\lambda_{1}(\mathbf{Q}) \geq \ldots \geq \lambda_{\vecdim}(\mathbf{Q}) \geq 0$. 
For a square matrix $\mathbf{M}$, we denote its spectral radius as $\rho(\mathbf{M}) \defeq \max \{ | \lambda |:  \lambda \mbox{ is an eigenvalue of } \mathbf{M} \}$. 
 
\section{Convex Functions} 
\label{sec_cvx_functions}

A function $f(\cdot): \mathbb{R}^{\vecdim} \rightarrow \mathbb{R}$ is convex if  
\begin{equation}
f((1-\alpha)\vx + \alpha \vy) \leq (1-\alpha) f(\vx) + \alpha f(\vy)  \nonumber
\end{equation}  
holds for any $\vx, \vy \in \mathbb{R}^{\vecdim}$ and $\alpha \in [0,1]$ \cite{Cevher14}. 
For a differentiable function $f(\cdot)$ with gradient 
$\nabla f(\vx)$, a necessary and sufficient condition for convexity 
is \cite[p.\ 70]{BoydConvexBook}
\begin{equation} 
f(\vy) \geq f(\vx)\!+\!(\vy\!-\!\vx)^{T}\nabla f(\vx), \nonumber
\end{equation}
which has to hold for any $\vx,\vy \in \mathbb{R}^{\vecdim}$. 

Our main object of interest in this paper is the optimization problem 
\begin{equation}
\label{equ_opt_problem}
\vx_{0} \in \argmin_{\vx \in \mathbb{R}^{\vecdim}} f(\vx). 
\end{equation} 
Given a convex function $f(\vx)$, we aim at finding a point $\vx_{0}$ 
with lowest function value $f(\vx_{0})$, i.e., $f(\vx_{0}) = \min_{\vx} f(\vx)$. 

In order to motivate our interest in optimization problems like \eqref{equ_opt_problem}, 
consider a machine learning problem based on training data $\dataset \defeq\{\vz^{(i)}\}_{i=1}^{\samplesize}$ 
consisting of $\samplesize$ data points $\vz^{(i)}\!=\!(\vd^{(i)},y^{(i)})$ with feature vector $\vd^{(i)} \in \mathbb{R}^{\vecdim}$ 
(which might represent the RGB pixel values of a webcam snapshot) and output or label $y^{(i)} \in \mathbb{R}$ (which might represent the local temperature 
during the snapshot). We wish to predict the label $y^{(i)}$ by a linear combination of the features, i.e.,
\begin{equation}
\label{equ_def_linear_predictor}
y^{(i)} \approx \vx^{T} \vd^{(i)}. 
\end{equation}
The choice for the weight vector $\vx\in\mathbb{R}^{\vecdim}$ is typically based on balancing the 
empirical risk incurred by the predictor \eqref{equ_def_linear_predictor}, i.e., 
\vspace*{-2mm}
\begin{equation} 
(1/\samplesize) \sum_{i=1}^{\samplesize} (y^{(i)}\!-\!\vx^{T} \vd^{(i)})^2, \nonumber
\end{equation} 
with some regularization term, e.g., measured by the squared norm $\| \vx \|^{2}$. Thus, 
the learning problem amounts to solving the optimization problem 
\begin{equation}
\label{equ_lrproblem}
\vx_{0}\!=\!\argmin_{\vx \in \mathbb{R}^{\vecdim}} (1/\samplesize) \sum_{i=1}^{\samplesize} (y^{(i)}\!-\!\vx^{T} \vd^{(i)})^2\!+\!\lambda \| \vx \|^{2}.
\end{equation}  
The learning problem \eqref{equ_lrproblem} is precisely of the form \eqref{equ_opt_problem} with the convex objective function 
\begin{equation}
\label{equ_objective_linreg}
f(\vx) \defeq (1/\samplesize) \sum_{i=1}^{\samplesize} (y^{(i)}\!-\!\vx^{T} \vd^{(i)})^2\!+\!\lambda \| \vx \|^{2}.
\end{equation} 
By choosing a large value for the regularization parameter $\lambda$, we de-emphasize the relevance of the training error and thus 
avoid overfitting. However, choosing $\lambda$ too large induces a bias if the true underlying 
weight vector has a large norm \cite{Goodfellow-et-al-2016,BishopBook}. A principled approach 
to find a suitable value of $\lambda$ is cross validation \cite{Goodfellow-et-al-2016,BishopBook}.

{\bf Differentiable Convex Functions.} 
Any differentiable function $f(\cdot)$ is accompanied by its gradient operator 
\begin{equation}
\label{equ_def_gradient_operator}
\nabla f: \mathbb{R}^{\vecdim} \rightarrow \mathbb{R}^{\vecdim}, \vx \mapsto \nabla f(\vx). 
\end{equation}
While the gradient operator $\nabla f$ is defined for any (even non-convex) differentiable 
function, the gradient operator of a convex function satisfies a strong structural 
property, i.e., it is a monotone operator \cite{BauschkeCombettesBook}. 

{\bf Smooth and Strongly Convex Functions.} If all second order partial derivatives of 
the function $f(\cdot)$ exist and are continuous, then $f(\cdot)$ is convex if and only if \cite[p. 71]{BoydConvexBook}
\begin{equation}
\nabla^{2} f(\vx) \succeq \mathbf{0} \mbox{ for every } \vx \in \mathbb{R}^{\vecdim}.  \nonumber
\end{equation}
We will focus on a particular class of twice differentiable 
convex functions, i.e., those with Hessian $\nabla^{2} f(\vx)$ satisfying 
\begin{equation}
\label{double_bound_hessian}
L\!\leq\!\lambda_{l} \big( \nabla^{2} f(\vx) \big)\!\leq\!U \mbox{ for every } \vx \in \mathbb{R}^{\vecdim},
\end{equation} 
with some known constants $U \geq L>0$.

The set of convex functions $f(\cdot):\mathbb{R}^{\vecdim} \rightarrow \mathbb{R}$ 
satisfying \eqref{double_bound_hessian} will be denoted $\funclass{L}{U}$. 
As it turns out, the difficulty of finding the minimum of some function $f(\cdot) \in \funclass{L}{U}$ 
using gradient methods is essentially governed by the 
\begin{equation}
\label{equ_def_condition_number}
\hspace*{-2mm}\mbox{ condition number } \kappa \defeq U/L \mbox{ of the function class } \funclass{L}{U}. 
\end{equation}
Thus, regarding the difficulty of optimizing the functions $f(\cdot) \in \funclass{L}{U}$, 
the absolute values of the bounds $L$ and $U$ in \eqref{double_bound_hessian} 
are not crucial, only their ratio $\kappa=U/L$ is. 

One particular sub-class of functions $f(\cdot) \in \funclass{L}{U}$ , which is of paramount 
importance for the analysis of gradient methods, are quadratic functions of the form
\begin{equation}
\label{equ_quadratic_function}
f(\vx) = (1/2) \vx^{T} \mathbf{Q} \vx \!+\!\mathbf{q}^{T} \vx\!+\!c, 
\end{equation}
with some vector $\vq \in \mathbb{R}^{\vecdim}$ and a psd matrix $\mathbf{Q} \in \mathbb{R}^{\vecdim \times \vecdim}$ having eigenvalues 
$\lambda(\mathbf{Q}) \in [L,U]$. As can be verified easily, the gradient and Hessian of a quadratic function of the 
form \eqref{equ_quadratic_function} are obtained as $\nabla f(\vx)\!=\!\mathbf{Q} \vx\!+\!\mathbf{q}$ and 
$\nabla^{2} f(\vx)\!=\!\mathbf{Q}$, respectively. 

It turns out that most of the results (see below) on gradient methods for minimizing quadratic functions of the form 
\eqref{equ_quadratic_function}, with some matrix $\mathbf{Q}$ having eigenvalues $\lambda(\mathbf{Q}) \in [L,U]$, 
apply (with minor modifications) also when expanding their scope from quadratic functions to the larger set $\funclass{L}{U}$. 
This should not come as a surprise, since any function $f(\cdot)\!\in\!\funclass{L}{U}$ can be approximated 
locally around a point $\vx_{0}$ by a quadratic function which is obtained by a truncated 
Taylor series \cite{RudinBookPrinciplesMatheAnalysis}. 
In particular, we have \cite[Theorem 5.15]{RudinBookPrinciplesMatheAnalysis}
\begin{align}
\label{equ_truncated_taylor_funclass}
f(\vx) &\!=\! f(\vx_{0})\!+\!(\vx\!-\!\vx_{0})^{T} \nabla f(\vx_{0}) \nonumber \\ 
& + (1/2)  (\vx\!-\!\vx_{0})^{T}  \nabla^{2} f(\vu) (\vx\!-\!\vx_{0}),
\end{align}
where $\vu = \eta \vx\!+\!(1\!-\!\eta) \vx_{0}$ with some $\eta \in [0,1]$. 

The crucial difference between the quadratic function \eqref{equ_quadratic_function} and a 
general function $f(\cdot) \in \funclass{L}{U}$ is that the matrix $\nabla^{2} f(\vz)$ appearing in 
the quadratic form in \eqref{equ_truncated_taylor_funclass} typically varies 
with the point $\vx$. In particular, we can rewrite \eqref{equ_truncated_taylor_funclass} as   
\begin{align}
\label{equ_approx_funclass_quadratics}
f(\vx) &\!=\! \nonumber \\ 
&\hspace*{-8mm} f(\vx_{0})\!+\!(\vx\!-\!\vx_{0})^{T} \nabla f(\vx_{0})\!+\!(1/2)  (\vx\!-\!\vx_{0})^{T} \mathbf{Q} (\vx\!-\!\vx_{0}) \nonumber \\ 
&\hspace*{-8mm} +  (1/2)  (\vx\!-\!\vx_{0})^{T}  (\nabla^{2} f(\vz)\!-\!\mathbf{Q})(\vx\!-\!\vx_{0}),
\end{align} 
with $\mathbf{Q}=\nabla^{2}f(\vx_{0})$. The last summand in \eqref{equ_approx_funclass_quadratics} 
quantifies the approximation error 
\begin{align} 
\label{equ_approx_error_quadratic}
\varepsilon(\vx) & \defeq f(\vx) - \tilde{f}(\vx)  \\ \nonumber 
& =   (1/2)(\vx\!-\!\vx_{0})^{T}  (\nabla^{2} f(\vz)\!-\!\mathbf{Q})(\vx\!-\!\vx_{0}) 
\end{align} 
obtained when approximating a function $f(\cdot)\in \funclass{L}{U}$ 
with the quadratic $\tilde{f}(\vx)$ obtained from \eqref{equ_quadratic_function} with the choices 
\begin{align}
\mathbf{Q}&\!=\!\nabla^{2}f(\vx_{0}), \nonumber \\ 
 \vq& \!=\! \nabla f(\vx_{0})-\mathbf{Q} \vx_{0} \mbox{ and } \nonumber \\ 
 c&\!=\!f(\vx_{0})\!+\!(1/2)\vx_{0}^{T} \mathbf{Q} \mathbf{x}_{0}\!-\!\vx_{0}^{T} \nabla f(\vx_{0}).  \nonumber
\end{align}
According to \eqref{double_bound_hessian}, which implies 
$\big\| \nabla^{2} f(\vx_{0}) \big\| , \big\| \nabla^{2} f(\vz)\big\|  \leq U$, we can bound the approximation error \eqref{equ_approx_error_quadratic} as 
\begin{align}
 \varepsilon(\vx) & \!\leq\! U \| \vx\!-\!\vx_{0} \|^{2}.  \nonumber
\end{align} 
Thus, we can ensure a arbitrarily small approximation error $\varepsilon$ by considering $f(\cdot)$ only over a 
neighbourhood $\mathcal{B}(\vx_{0},r)\defeq \{\vx:  \| \vx\!-\!\vx_{0} \| \leq r \}$ with sufficiently small radius $r>0$. 

Let us now verify that learning a (regularized) linear regression model (cf.\ \eqref{equ_lrproblem}) 
amounts to minimizing a convex quadratic function of the form \eqref{equ_quadratic_function}. 
Indeed, using some elementary linear algebraic manipulations, we can 
rewrite the objective function in \eqref{equ_objective_linreg} as a quadratic of the form \eqref{equ_quadratic_function} 
using the particular choices $\mathbf{Q}\!=\!\mathbf{Q}_{\rm LR}$ and $\vq\!=\!\vq_{\rm LR}$ with 
\begin{equation} 
\label{equ_quadratic_LR}
\mathbf{Q}_{\rm LR} \!\defeq\! \lambda \mathbf{I}\!+\!\frac{1}{\samplesize}\sum_{i=1}^{\samplesize} \vd^{(i)} \big( \vd^{(i)} \big)^{T} 
\mbox{, and } \vq_{\rm LR} \!\defeq\! \frac{2}{\samplesize} \sum_{i=1}^{\samplesize} y^{(i)} \vd^{(i)}.
\end{equation} 
The eigenvalues of the matrix $\mathbf{Q}_{\rm LR}$ obey \cite{golub96} 
\begin{equation}
\lambda \leq \lambda_{l}\big( \mathbf{Q}_{\rm LR} \big) \leq \lambda + \lambda_{1}(\mD^{T} \mD)  \nonumber
\end{equation}
with the data matrix $\mD\defeq \big( \vd^{(1)},\ldots,\vd^{(\samplesize)} \big) \in \mathbb{R}^{\vecdim \times \samplesize}$. 
Hence, learning a regularized linear regression model via \eqref{equ_lrproblem} amounts to minimizing a convex quadratic function 
$f(\cdot) \in \funclass{L}{U}$ with $L\!=\!\lambda$ and $U\!=\!\lambda\!+\!\lambda_{1}(\mD^{T} \mD)$, where $\lambda$ denotes 
the regularization parameter used in \eqref{equ_lrproblem}. 

\section{Gradient Descent}
\label{sec_gradient_descent}

Let us now show how one of the most basic methods for solving the problem \eqref{equ_opt_problem}, i.e., the GD method, 
can be obtained naturally as fixed-point iterations involving the gradient operator $\nabla f$ (cf.\ \eqref{equ_def_gradient_operator}). 

Our point of departure is the necessary and sufficient condition \cite{BoydConvexBook}
\begin{equation}
\label{equ_zero_gradient}
\nabla f(\vx_{0}) = \mathbf{0},  
\end{equation} 
for a vector $\vx_{0} \in \mathbb{R}^{\vecdim}$ to be optimal for the problem \eqref{equ_opt_problem} with 
a convex differentiable objective function $f(\cdot) \in \funclass{L}{U}$. 
\begin{lemma}
\label{lem_fixed_points}
We have $\nabla f(\vx) = \mathbf{0}$ if and 
only if the vector $\vx \in \mathbb{R}^{\vecdim}$ is a fixed point of the operator 
\begin{equation}
\label{equ_def_operator_alpha}
\gdop{\alpha}: \mathbb{R}^{\vecdim} \rightarrow \mathbb{R}^{\vecdim}: \vx \mapsto \vx - \alpha \nabla f(\vx),
\end{equation} 
for an arbitrary but fixed non-zero $\alpha \in \mathbb{R} \setminus \{0\}$. 
Thus, 
\begin{equation} 
\nabla f(\vx)=\mathbf{0} \mbox{ if and only if } \gdop{\alpha} \vx = \vx. \nonumber
\end{equation} 
\end{lemma}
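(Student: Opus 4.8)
The plan is to prove the biconditional by direct algebraic manipulation, exploiting the fact that $\alpha \neq 0$. The statement claims that the zeros of the gradient operator $\nabla f$ coincide exactly with the fixed points of $\gdop{\alpha}$, so it suffices to show the equivalence $\gdop{\alpha}\vx = \vx \iff \nabla f(\vx) = \mathbf{0}$ for every $\vx \in \mathbb{R}^{\vecdim}$.

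First I would unfold the definition \eqref{equ_def_operator_alpha}: the fixed-point equation $\gdop{\alpha}\vx = \vx$ reads $\vx - \alpha \nabla f(\vx) = \vx$, which upon subtracting $\vx$ from both sides is equivalent to $\alpha \nabla f(\vx) = \mathbf{0}$. Since $\alpha$ is a fixed nonzero scalar, multiplying by $1/\alpha$ shows this is in turn equivalent to $\nabla f(\vx) = \mathbf{0}$. Both implications are then immediate: if $\nabla f(\vx) = \mathbf{0}$ then $\gdop{\alpha}\vx = \vx - \alpha \mathbf{0} = \vx$, and conversely if $\gdop{\alpha}\vx = \vx$ then $\alpha \nabla f(\vx) = \mathbf{0}$, hence $\nabla f(\vx) = \mathbf{0}$ because $\alpha \neq 0$. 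The displayed ``Thus'' line at the end of the lemma is then just a restatement of what has been shown.

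Honestly, there is no real obstacle here — the only thing one must be careful about is the hypothesis $\alpha \in \mathbb{R} \setminus \{0\}$, which is exactly what licenses cancelling $\alpha$; without it the reverse implication would fail (for $\alpha = 0$ every point is a fixed point of $\gdop{0} = \mathrm{id}$). I would state this explicitly so the role of the nonzero step size is transparent, and I would not invoke convexity or the bounds \eqref{double_bound_hessian} at all, since the equivalence is purely formal and holds for any differentiable $f$. The connection to optimality of \eqref{equ_opt_problem} is not part of this lemma's claim but follows by combining it with the first-order condition \eqref{equ_zero_gradient}; I would leave that remark to the surrounding text rather than fold it into the proof.
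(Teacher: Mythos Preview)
Your proposal is correct and mirrors the paper's own proof: both directions are handled by unfolding the definition of $\gdop{\alpha}$ and using $\alpha\neq 0$ to cancel. The paper writes the two implications out separately with a short chain of equalities, whereas you compress them into a single equivalence, but the argument is identical in substance.
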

\begin{proof}
Consider a vector $\vx$ such that $\nabla f (\vx) = \mathbf{0}$. Then, 
\begin{equation}
\gdop{\alpha} \vx  \stackrel{\eqref{equ_def_operator_alpha}}{=} \vx - \alpha \nabla f(\vx) = \vx.    \nonumber
\end{equation} 

Conversely, let $\vx$ be a fixed point of $\gdop{\alpha}$, i.e., 
\begin{equation} 
\label{fixed_point_proof}
\gdop{\alpha} \vx = \vx. 
\end{equation}
Then, 
\begin{align}
\nabla f(\vx) & \stackrel{\alpha\!\neq\!0}{=} (1/\alpha) (\vx - (\vx - \alpha \nabla f(\vx))) \nonumber \\ 
  & \stackrel{\eqref{equ_def_operator_alpha}}{=}  (1/\alpha) (\vx - \gdop{\alpha} \vx) \nonumber \\ 
  &  \stackrel{\eqref{fixed_point_proof}}{=} \mathbf{0}.   \nonumber
\end{align} 
\end{proof} 

According to Lemma \ref{lem_fixed_points}, the solution $\vx_{0}$ of the optimization problem 
\eqref{equ_opt_problem} is obtained as the fixed point of the operator $\gdop{\alpha}$ (cf.\ \eqref{equ_def_operator_alpha}) with 
some non-zero $\alpha$. As we will see shortly, the freedom in choosing different values 
for $\alpha$ can be exploited in order to compute the fixed points of $\gdop{\alpha}$ more efficiently. 

A straightforward approach to finding the fixed-points of an 
operator $\gdop{\alpha}$ is via the fixed-point iteration
\begin{equation}
\label{equ_fixed_point_iterations}
\vx^{(k+1)} = \gdop{\alpha} \vx^{(k)}.
\end{equation} 
By tailoring a fundamental result of analysis (cf. \cite[Theorem 9.23]{RudinBookPrinciplesMatheAnalysis}), 
we can characterize the convergence of the sequence $\vx^{(k)}$ obtained from \eqref{equ_fixed_point_iterations}. 
\begin{lemma}
\label{lem_contraction_mapping}
Assume that for some $q\!\in\![0,1)$, we have
\begin{equation}
\label{equ_contraction_inqu}
 \big\| \gdop{\alpha} \vx - \gdop{\alpha} \vy\big\| \leq q \| \vx - \vy \|,
\end{equation} 
for any $\vx, \vy \in \mathbb{R}^{\vecdim}$.
Then, the operator $\gdop{\alpha}$ has a unique fixed point $\vx_{0}$ 
and the iterates $\vx^{(k)}$ (cf.\ \eqref{equ_fixed_point_iterations}) satisfy 
\begin{equation}
\label{equ_iteration_error}
\| \vx^{(k)} - \vx_{0} \| \leq \| \vx^{(0)} - \vx_{0} \| q^{k} . 
\end{equation} 
\end{lemma}
\begin{proof}
Let us first verify that the operator $\gdop{\alpha}$ cannot have 
two different fixed points. Indeed, assume there would be two different fixed points $\vx$, $\vy$ such that 
\begin{equation} 
\label{equ_fixed_points_x_y}
\vx = \gdop{\alpha} \vx \mbox{, and }\vy = \gdop{\alpha} \vy.
\end{equation} 
This would imply, in turn, 
\begin{align}
q \| \vx - \vy \| & \stackrel{\eqref{equ_contraction_inqu}}{\geq}  \big\| \gdop{\alpha} \vx - \gdop{\alpha} \vy \big\|  \nonumber  \\ 
&  \stackrel{\eqref{equ_fixed_points_x_y}}{=}  \| \vx - \vy \|.  \nonumber
\end{align} 
However, since $q <1$, this inequality can only be satisfied if $\| \vx - \vy \| = 0$, 
i.e., we must have $\vx = \vy$. Thus, we have shown that no two different fixed points 
can exist. The existence of one unique fixed point $\vx_{0}$ follows from \cite[Theorem 9.23]{RudinBookPrinciplesMatheAnalysis}. 

The estimate \eqref{equ_iteration_error} can be obtained by induction and noting 
\begin{align}
\| \vx^{(k+1)} - \vx_{0} \| & \stackrel{\eqref{equ_fixed_point_iterations}}{=} \| \gdop{\alpha} \vx^{(k)} - \vx_{0} \| \nonumber \\ 
&  \stackrel{(a)}{=} \| \gdop{\alpha} \vx^{(k)} - \gdop{\alpha}  \vx_{0} \| \nonumber \\ 
&  \stackrel{\eqref{equ_contraction_inqu}}{\leq} q \| \vx^{(k)} -  \vx_{0} \|.  \nonumber
\end{align} 
Here, step $(a)$ is valid since $\vx_{0}$ is a fixed point of $\gdop{\alpha}$, i.e., $\vx_{0} = \gdop{\alpha} \vx_{0}$.
\end{proof}

In order to apply Lemma \ref{lem_contraction_mapping} to \eqref{equ_fixed_point_iterations}, 
we have to ensure that the operator $\gdop{\alpha}$ is a contraction, i.e., it satisfies \eqref{equ_contraction_inqu} 
with some contraction coefficient $q \in [0,1)$. For the operator $\gdop{\alpha}$ (cf.\ \eqref{equ_def_operator_alpha}) associated 
with the function $f(\cdot) \in \funclass{L}{U}$ this can be verified by standard results from vector analysis. 
\begin{lemma}
\label{lemma_condition_contraction}
Consider the operator $\gdop{\alpha}: \vx \mapsto \vx - \alpha \nabla f (\vx)$ with some convex function $f(\cdot) \in \funclass{L}{U}$. 
Then,  
\begin{equation}
 \big\| \gdop{\alpha} \vx - \gdop{\alpha} \vy\big\| \leq q(\alpha) \| \vx - \vy \| \nonumber
\end{equation} 
with contraction factor 
\begin{equation}
\label{equ_def_contraction_factor}
q(\alpha) \defeq \max\{ |1\!-\!U \alpha |, |1\!-\!L \alpha| \}.
\end{equation}
\end{lemma}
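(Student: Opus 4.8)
The plan is to reduce the Lipschitz estimate for $\gdop{\alpha}$ to a spectral-norm bound for an averaged Hessian matrix. First I would write, for arbitrary $\vx,\vy \in \mathbb{R}^{\vecdim}$,
\begin{equation}
\gdop{\alpha} \vx - \gdop{\alpha} \vy = (\vx - \vy) - \alpha \bigl( \nabla f(\vx) - \nabla f(\vy) \bigr). \nonumber
\end{equation}
Since $f(\cdot) \in \funclass{L}{U}$ is twice continuously differentiable, applying the fundamental theorem of calculus to $t \mapsto \nabla f\bigl(\vy + t(\vx-\vy)\bigr)$ on $[0,1]$ yields $\nabla f(\vx) - \nabla f(\vy) = \mathbf{H}(\vx-\vy)$ with $\mathbf{H} \defeq \int_{0}^{1} \nabla^{2} f\bigl( \vy + t(\vx-\vy) \bigr)\, dt$. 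Hence $\gdop{\alpha}\vx - \gdop{\alpha}\vy = (\mathbf{I} - \alpha \mathbf{H})(\vx-\vy)$, so that $\bigl\| \gdop{\alpha}\vx - \gdop{\alpha}\vy\bigr\| \leq \|\mathbf{I}-\alpha\mathbf{H}\|\, \|\vx-\vy\|$, and it remains only to show $\|\mathbf{I}-\alpha\mathbf{H}\| \leq q(\alpha)$.

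The key step is to control the spectrum of the averaged Hessian $\mathbf{H}$. The matrix $\mathbf{H}$ is symmetric, being an integral of symmetric matrices, and for every unit vector $\vz$ we have $\vz^{T} \mathbf{H} \vz = \int_{0}^{1} \vz^{T} \nabla^{2} f\bigl(\vy+t(\vx-\vy)\bigr) \vz \, dt$, which by \eqref{double_bound_hessian} lies in $[L,U]$. Therefore every eigenvalue of $\mathbf{H}$ lies in $[L,U]$ as well, and since $\mathbf{I} - \alpha \mathbf{H}$ is symmetric with eigenvalues $1 - \alpha \lambda_{l}(\mathbf{H})$, its spectral norm equals $\max_{l} |1 - \alpha \lambda_{l}(\mathbf{H})|$. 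Finally, because $\lambda \mapsto |1-\alpha\lambda|$ is convex, its maximum over the interval $[L,U]$ is attained at an endpoint, whence $\|\mathbf{I}-\alpha\mathbf{H}\| \leq \max\{|1-\alpha L|, |1-\alpha U|\} = q(\alpha)$, completing the argument.

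The computations are all routine; the only point requiring a little care is the passage from the pointwise bound \eqref{double_bound_hessian} on $\nabla^{2} f$ — which holds at each individual point, but generally with a different eigenbasis at each point — to a bound valid for the single averaged matrix $\mathbf{H}$. The quadratic-form argument above is exactly what bridges this gap, since it avoids diagonalizing the varying Hessians simultaneously while keeping the symmetry of the relevant matrix manifest. An alternative that sidesteps the integral representation is to invoke the mean value theorem as in \eqref{equ_truncated_taylor_funclass} coordinatewise, but the averaging argument is cleaner and gives the sharp factor $q(\alpha)$ directly.
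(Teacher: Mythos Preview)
Your proof is correct and follows the same overall strategy as the paper's: write $\gdop{\alpha}\vx-\gdop{\alpha}\vy=(\mathbf{I}-\alpha\mathbf{H})(\vx-\vy)$ for a Hessian-type matrix $\mathbf{H}$ with spectrum in $[L,U]$, then bound $\|\mathbf{I}-\alpha\mathbf{H}\|$ by $q(\alpha)$.

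The one substantive difference is in how $\mathbf{H}$ is produced. The paper invokes a pointwise mean value theorem, asserting $\nabla f(\vx)-\nabla f(\vy)=\nabla^{2}f(\vz)(\vx-\vy)$ for a \emph{single} intermediate point $\vz$; you instead use the integral representation $\mathbf{H}=\int_{0}^{1}\nabla^{2}f\bigl(\vy+t(\vx-\vy)\bigr)\,dt$. Your route is the more careful one: the pointwise mean value theorem in equality form is not generally valid for vector-valued maps such as $\nabla f:\mathbb{R}^{\vecdim}\to\mathbb{R}^{\vecdim}$, so the paper's step~$(a)$ is, strictly speaking, a gap that your averaged-Hessian argument closes. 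The quadratic-form calculation you give to transfer the pointwise spectral bounds \eqref{double_bound_hessian} to the averaged matrix $\mathbf{H}$ is exactly the right device, and your observation that the convex function $\lambda\mapsto|1-\alpha\lambda|$ attains its maximum at an endpoint of $[L,U]$ makes the final estimate transparent. In short: same architecture, but your version is technically cleaner.
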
 
\begin{proof}
First, 
\begin{align} 
\label{equ_equ_contraction_gradient_111}
\gdop{\alpha} \vx - \gdop{\alpha} \vy & \stackrel{\eqref{equ_def_operator_alpha}}{=} (\vx\!-\!\vy)\!-\!\alpha(\nabla f(\vx)\!-\!\nabla f(\vy)) \nonumber \\ 
& \stackrel{(a)}{=}  (\vx\!-\!\vy)\!-\!\alpha\nabla^{2} f(\vz)  (\vx\!-\!\vy) \nonumber \\ 
& = (\mathbf{I}\!-\!\alpha \nabla^{2} f(\vz)  )(\vx\!-\!\vy)  
\end{align} 
using $\vz=\eta \vx + (1-\eta) \vy$ with some $\eta \in [0,1]$. Here, we used in step $(a)$ the mean value theorem 
of vector calculus \cite[Theorem 5.10]{RudinBookPrinciplesMatheAnalysis}. 

Combining \eqref{equ_equ_contraction_gradient_111} with the submultiplicativity of 
Euclidean and spectral norm \cite[p.\ 55]{golub96} yields 
\begin{equation}
\label{equ_proof_GD_contrac_submult}
\| \gdop{\alpha} \vx - \gdop{\alpha} \vy \| \leq \| \vx - \vy \| \| \mathbf{I} - \alpha \nabla^{2} f(\vz) \|. 
\end{equation} 
The matrix $\mathbf{M}^{(\alpha)} \!\defeq\!\mathbf{I}\!-\!\alpha \nabla^{2} f(\vz)$ is symmetric 
($\mathbf{M}^{(\alpha)} = \big(\mathbf{M}^{(\alpha)}\big)^{T}$) with real-valued eigenvalues \cite{golub96}
\begin{equation}
\label{equ_lambda_l_interval}
\lambda_{l}\big( \mathbf{M}^{(\alpha)} \big) \in [1-U \alpha, 1- L \alpha]. 
\end{equation}
Since also 
\begin{align}
\label{equ_upper_bound_specnorm_LU} 
\| \mathbf{M}^{(\alpha)}  \| & = \max\{ | \lambda_{l}| \} \nonumber \\ 
  & \stackrel{\eqref{equ_lambda_l_interval}}{\leq} \max\{ |1\!-\!U \alpha |, |1\!-\!L \alpha| \},
\end{align} 
we obtain from \eqref{equ_proof_GD_contrac_submult}  
\begin{equation} 
\| \gdop{\alpha} \vx\!-\!\gdop{\alpha} \vy \|\!\stackrel{\eqref{equ_upper_bound_specnorm_LU}}{\leq}\! \| \vx\!-\!\vy \| \max\{ |1\!-\!U \alpha |, |1\!-\!L \alpha| \}.  \nonumber
\end{equation}
\end{proof}
It will be handy to write out the straightforward combination of Lemma \ref{lem_contraction_mapping} 
and Lemma \ref{lemma_condition_contraction}. 
\begin{lemma}
\label{lem_main_charac_contract}
Consider a convex function $f(\cdot) \in \funclass{L}{U}$ with the unique minimizer $\vx_{0}$, i.e., $f(\vx_{0}) = \min_{\vx} f(\vx)$. 
We then construct  the operator $\gdop{\alpha}: \vx \mapsto \vx - \alpha \nabla f (\vx)$ 
with a step size $\alpha$ such that 
\begin{equation} 
q(\alpha) \stackrel{\eqref{equ_def_contraction_factor}}{=} \max\{ |1\!-\!U \alpha |, |1\!-\!L \alpha| \} < 1. \nonumber
\end{equation} 
Then, starting from an arbitrary initial guess $\vx^{(0)}$, the iterates 
$\vx^{(k)}$ (cf.\ \eqref{equ_fixed_point_iterations}) satisfy 
\begin{equation}
\label{equ_iteration_error_1}
\| \vx^{(k)} - \vx_{0} \| \leq \| \vx^{(0)} - \vx_{0} \| \big[q(\alpha)\big]^{k} .
\end{equation} 
\end{lemma}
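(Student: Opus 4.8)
The plan is to combine the two preceding lemmas in the only way that makes sense: Lemma~\ref{lemma_condition_contraction} supplies the contraction inequality for $\gdop{\alpha}$ with explicit factor $q(\alpha)$, and Lemma~\ref{lem_contraction_mapping} turns that contraction property into the geometric decay bound on the iterates. So the proof is essentially a chaining argument with one small bookkeeping point to check.

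First I would invoke Lemma~\ref{lemma_condition_contraction} for the operator $\gdop{\alpha}: \vx \mapsto \vx - \alpha \nabla f(\vx)$ associated with $f(\cdot) \in \funclass{L}{U}$, which gives $\|\gdop{\alpha}\vx - \gdop{\alpha}\vy\| \leq q(\alpha)\|\vx-\vy\|$ for all $\vx,\vy \in \mathbb{R}^{\vecdim}$, with $q(\alpha) = \max\{|1-U\alpha|,|1-L\alpha|\}$ as in \eqref{equ_def_contraction_factor}. Second, the hypothesis of the lemma is exactly $q(\alpha) < 1$, so with $q := q(\alpha)$ the operator $\gdop{\alpha}$ satisfies the contraction condition \eqref{equ_contraction_inqu} of Lemma~\ref{lem_contraction_mapping} with a constant $q \in [0,1)$. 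Applying Lemma~\ref{lem_contraction_mapping} then yields that $\gdop{\alpha}$ has a unique fixed point and that the iterates $\vx^{(k)}$ from \eqref{equ_fixed_point_iterations} obey $\|\vx^{(k)} - \vx_{0}\| \leq \|\vx^{(0)} - \vx_{0}\|\,q^{k}$, which is precisely \eqref{equ_iteration_error_1} once we substitute back $q = q(\alpha)$.

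The only genuine thing to verify is that the unique fixed point delivered by Lemma~\ref{lem_contraction_mapping} coincides with the minimizer $\vx_{0}$ named in the statement. This follows from Lemma~\ref{lem_fixed_points}: since $f$ is convex and differentiable, $\vx_{0}$ is a minimizer iff $\nabla f(\vx_{0}) = \mathbf{0}$ (the condition \eqref{equ_zero_gradient}), and by Lemma~\ref{lem_fixed_points} this holds iff $\gdop{\alpha}\vx_{0} = \vx_{0}$ for the fixed nonzero $\alpha$ under consideration. Hence the minimizer is a fixed point of $\gdop{\alpha}$, and by the uniqueness part of Lemma~\ref{lem_contraction_mapping} it is \emph{the} fixed point, so the error bound \eqref{equ_iteration_error_1} is about convergence to $\vx_{0}$ as claimed. (Strong convexity, i.e.\ $L > 0$, is what guarantees the minimizer exists and is unique in the first place, consistent with the phrasing of the statement.)

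I do not expect any real obstacle here; the content of the result is entirely in Lemmas~\ref{lem_fixed_points}, \ref{lem_contraction_mapping}, and \ref{lemma_condition_contraction}, and this lemma is a convenience repackaging. The one place to be slightly careful is the identification of the contraction constant $q$ in Lemma~\ref{lem_contraction_mapping} with the explicit $q(\alpha)$ of Lemma~\ref{lemma_condition_contraction}, and making sure the hypothesis $q(\alpha)<1$ is carried through unchanged; beyond that the proof is a two-line citation of the earlier results.
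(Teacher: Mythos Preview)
Your proposal is correct and matches the paper's approach exactly: the paper does not even give a separate proof for this lemma, stating only that it is ``the straightforward combination of Lemma~\ref{lem_contraction_mapping} and Lemma~\ref{lemma_condition_contraction},'' which is precisely what you have written out (with the additional, helpful observation via Lemma~\ref{lem_fixed_points} that the unique fixed point coincides with the minimizer $\vx_{0}$).
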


According to Lemma \ref{lem_main_charac_contract}, and also illustrated in Figure \ref{fig_fixed_point}, 
starting from an arbitrary initial guess 
$\vx^{(0)}$, the sequence $\vx^{(k)}$ generated by the fixed-point iteration \eqref{equ_fixed_point_iterations} 
is guaranteed to converge to the unique solution $\vx_{0}$ of \eqref{equ_opt_problem}, i.e., $\lim_{k \rightarrow \infty} \vx^{(k)} = \vx_{0}$. 
What is more, this convergence is quite fast, since the error $\| \vx^{(k)}\!-\!\vx_{0} \|$ decays 
at least exponentially according to \eqref{equ_iteration_error_1}. Loosely speaking, this exponential decrease implies that 
the number of additional iterations required to have on more correct digit in $\vx^{(k)}$ is 
constant. 

Let us now work out the iterations \eqref{equ_fixed_point_iterations} more explicitly by inserting 
the expression \eqref{equ_def_operator_alpha} for the operator $\gdop{\alpha}$. 
We then obtain the following equivalent representation of \eqref{equ_fixed_point_iterations}: 
\begin{equation}
\label{equ_iteration_GD}
\vx^{(k+1)} =\vx^{(k)}- \alpha \nabla f(\vx^{(k)}).
\end{equation}
This iteration is nothing but plain vanilla GD using a fixed step size $\alpha$ \cite{Goodfellow-et-al-2016}. 

Since the GD iteration \eqref{equ_iteration_GD} is precisely the fixed-point iteration \eqref{equ_fixed_point_iterations}, we can 
use Lemma \ref{lem_main_charac_contract} to characterize the convergence (rate) of GD. 
In particular, convergence of GD is ensured by choosing the step size of GD \eqref{equ_iteration_GD} such that 
$q(\alpha) =\max\{ |1\!-\!U \alpha |, |1\!-\!L \alpha| \} < 1$. Moreover, in order to make the convergence 
as fast as possible we need to chose the step size $\alpha=\alpha^{*}$ which makes 
the contraction factor $q(\alpha)$ (cf.\ \eqref{equ_def_contraction_factor}) as small as possible. 

\begin{figure}[ht]
\begin{pspicture}[algebraic](-5mm,-1cm)(7,7)
\psaxes[labels=none,ticks=none]{->}(7.4,7.4)
\psplot[linecolor=red,linewidth=3pt]{0}{7}{5/20+8*x/10}
\psline[linewidth=0.5pt](7,7)
\rput[tl](5,4.1){$\gdop{\alpha}$}
\rput[tl](5.4,6.4){$\vx\!=\!\vy$}
\rput[tl](6.6,-0.2){$\vx^{(0)}$}
 \rput[tl](7.2,0.3){$\vx$}
 \rput[tl](-0.4,7.2){$\vy$}
 \rput[tl](5.5,-0.2){$\vx^{(1)}$}
  \rput[tl](-0.6,5.3){$\vx^{(2)}$}
   \rput[tl](1.2,-0.2){$\vx_{0}$}
\psline[linewidth=0.5pt,linestyle=dashed](0,4.95)(5,4.95)
\psline[linewidth=0.5pt](1.2,1.2)(1.2,0)
\psline[linewidth=0.5pt,linestyle=dashed](5.85,0)(5.85,4.95)
  \psFixpoint[linecolor=blue,linestyle=dashed]{7}{5/20+8*x/10}{20}
 \vspace*{-4mm}
\end{pspicture}
 \vspace*{-3mm}
\caption{Fixed-point iterations for a contractive mapping $\gdop{\alpha}$ with the unique fixed point $\vx_{0}$.}
\label{fig_fixed_point}
 \vspace*{3mm}
\end{figure}
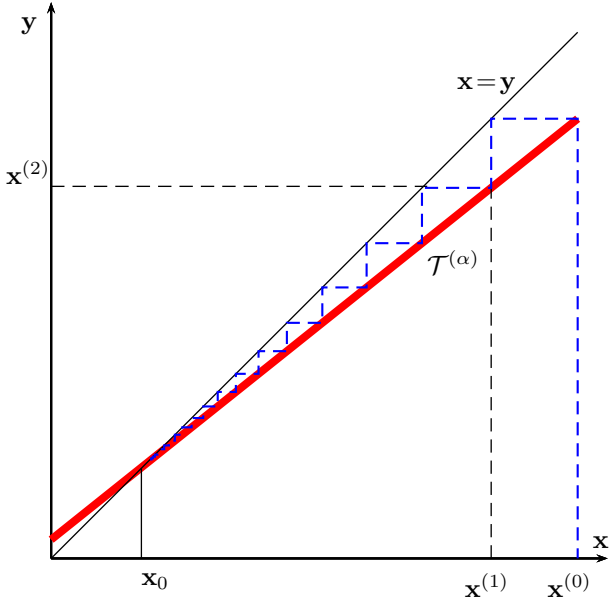

\vspace*{14mm}
\begin{figure}[htb]
\vspace*{14mm}
\begin{center}
\hspace*{-20mm}
\begin{pspicture}(-0.4,-0.4)(4.3,2.4)
\psset{unit=1.5cm}
\psaxes[labels=none]{->}(0,0)(-0.4,-0.4)(4.1,2.4)
\psline[linewidth=1pt]{}(0,2)(2,0)
\psline[linewidth=1pt]{}(2,0)(4,2)
\psline[linewidth=1pt,linestyle=dashed]{}(0,2)(4,1)
\psline[linewidth=1pt,linestyle=dotted]{}(0,1.2)(3.2,1.2)
\rput[tl](-0.9,1.4){$q^{*}\!=\!\frac{\kappa-1}{\kappa+1}$}
\psline[linewidth=1pt,linestyle=dotted]{}(3.2,1.2)(3.2,0)
\psline[linewidth=1pt,linecolor=red]{}(0,2.015)(3.2,1.215)
\psline[linewidth=1pt,linecolor=red]{}(3.2,1.215)(4,2.015)
\rput[tl](3.0,-0.05){$\alpha^{*}\!=\!\frac{2}{L+U}$}
\rput[tl](2.3,1.7){$q(\alpha)$}
\rput[tl](-0.4,2){$1$}
\rput[tl](2,-0.2){$1/U$}
\rput[tl](4.2,0.1){$\alpha$}
\rput[tl](4,2){$|1\!-\!U \alpha|$}
\rput[tl](4,1){$|1\!-\!L \alpha|$}
\end{pspicture}
\end{center}
\caption{Dependence of contraction factor $q(\alpha)= \max \{ |1\!-\!\alpha L|,|1\!-\!\alpha U|\}$ on step size $\alpha$.}
\label{fig_contrac_alpha_func}
\end{figure}
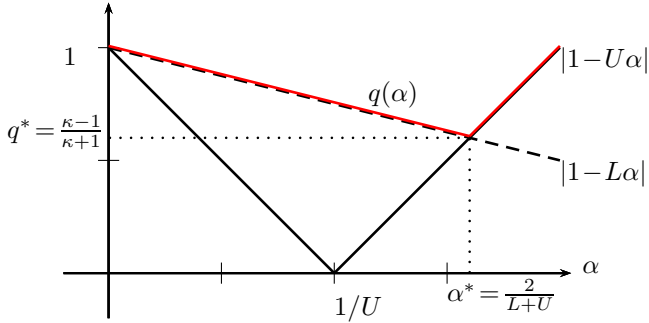

In Figure \ref{fig_contrac_alpha_func}, we illustrate how the quantifies $|1 - \alpha L|$ and $|1 - \alpha U|$ evolve  
as the step size $\alpha$ (cf.\ \eqref{equ_iteration_GD}) is varied. From Figure \ref{fig_contrac_alpha_func} we can easily read 
off the optimal choice 
\begin{equation} 
\label{equ_opt_step_size}
\alpha^{*}= \frac{2}{L + U} 
\end{equation} 
yielding the smallest possible contraction factor 
\begin{equation} 
q^{*} = \min_{\alpha \in [0,1]} q(\alpha) = \frac{U\!-\!L}{U\!+\!L} \stackrel{\eqref{equ_def_contraction_factor}}{=} \frac{\kappa\!-\!1}{\kappa\!+\!1}. \nonumber
\end{equation} 
We have arrived at the following characterization of 
GD for minimizing convex functions $f(\cdot) \in \funclass{L}{U}$. 
\begin{theorem}
\label{equ_theorem_GD_convergence}
Consider the optimization problem \eqref{equ_opt_problem} with objective function $f(\cdot) \in \funclass{L}{U}$, where 
the parameters $L$ and $U$ are fixed and known. Starting from an arbitrarily chosen initial 
guess $\vx^{(0)}$, we construct a sequence by GD \eqref{equ_iteration_GD} 
using the optimal step size \eqref{equ_opt_step_size}. Then,  
\begin{equation}
\label{equ_upper_bound_GD}
\| \vx^{(k)}\!-\!\vx_{0} \| \!\leq\! \bigg(\frac{\kappa\!-\!1}{\kappa\!+\!1}\bigg)^{k}  \| \vx^{(0)}\!-\!\vx_{0} \|. 
\end{equation}
\end{theorem}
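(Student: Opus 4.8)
The plan is to read off the bound \eqref{equ_upper_bound_GD} almost directly from Lemma \ref{lem_main_charac_contract}, whose hypotheses are already phrased for exactly the GD iteration \eqref{equ_iteration_GD}. The only gap left to fill is to confirm that the specific step size $\alpha^{*} = 2/(L+U)$ from \eqref{equ_opt_step_size} produces a contraction factor strictly below $1$ and that this factor equals $(\kappa-1)/(\kappa+1)$; once this is established, substituting $q(\alpha^{*}) = (\kappa-1)/(\kappa+1)$ into \eqref{equ_iteration_error_1} is exactly the assertion.

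First I would invoke Lemma \ref{lemma_condition_contraction}, which tells us that for any $\alpha$ the operator $\gdop{\alpha}$ obeys the contraction estimate \eqref{equ_contraction_inqu} with $q(\alpha) = \max\{|1-U\alpha|,|1-L\alpha|\}$ as in \eqref{equ_def_contraction_factor}. Then I would carry out the elementary minimization of the scalar map $\alpha \mapsto q(\alpha)$ over $\alpha > 0$ that is depicted in Figure \ref{fig_contrac_alpha_func}: on $\alpha \le 1/U$ both arguments are nonnegative and $q(\alpha) = 1-L\alpha$ is decreasing; on $\alpha \ge 1/L$ both are nonpositive and $q(\alpha) = U\alpha - 1$ is increasing; on the intermediate interval $q(\alpha) = \max\{U\alpha - 1,\, 1-L\alpha\}$, and the two competing affine functions coincide precisely at $\alpha^{*} = 2/(L+U)$. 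Hence $\alpha^{*}$ is a global minimizer of $q$, with value $U\alpha^{*} - 1 = 1 - L\alpha^{*} = (U-L)/(U+L)$, which after dividing numerator and denominator by $L$ becomes $(\kappa-1)/(\kappa+1)$ by the definition \eqref{equ_def_condition_number} of the condition number.

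Next I would observe that $U \ge L > 0$ forces $0 \le (\kappa-1)/(\kappa+1) < 1$, so the standing hypothesis $q(\alpha^{*}) < 1$ of Lemma \ref{lem_main_charac_contract} is met. Applying that lemma with $\alpha = \alpha^{*}$ then gives $\| \vx^{(k)} - \vx_{0}\| \le \| \vx^{(0)} - \vx_{0}\|\, [q(\alpha^{*})]^{k} = \big(\frac{\kappa-1}{\kappa+1}\big)^{k} \| \vx^{(0)} - \vx_{0}\|$, which is \eqref{equ_upper_bound_GD}. I do not anticipate a real obstacle: the proof is bookkeeping that stitches together the already-proven lemmas with the piecewise-linear optimization of Figure \ref{fig_contrac_alpha_func}. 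The one place I would be careful to write a sentence rather than merely point at the figure is the claim that $\alpha^{*}$ minimizes $q$ globally over all admissible step sizes, which is why I would explicitly record the monotonicity of $q$ on the three subintervals above before reading off the minimizing value.
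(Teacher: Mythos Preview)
Your proposal is correct and follows exactly the route the paper takes: the theorem is stated in the paper as the immediate consequence of Lemma~\ref{lem_main_charac_contract} together with the computation (read off from Figure~\ref{fig_contrac_alpha_func}) that $\alpha^{*}=2/(L+U)$ minimizes $q(\alpha)$ with value $q^{*}=(\kappa-1)/(\kappa+1)$. Your added monotonicity argument on the three subintervals is a welcome bit of rigor beyond merely pointing at the figure, but it does not deviate from the paper's approach.
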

In what follows, we will use the shorthand $\gdopopt \defeq \gdop{\alpha^{*}}$ for the 
gradient operator $\gdop{\alpha}$ (cf.\ \eqref{equ_def_operator_alpha}) obtained for the optimal step size
 $\alpha = \alpha^{*}$ (cf.\ \eqref{equ_opt_step_size}). 

\section{First Order Methods}
\label{sec_FOM} 

Without a computational model taking into account a finite amount of resources, the study of the 
computational complexity inherent to \eqref{equ_opt_problem} becomes meaningless. 
Consider having unlimited computational resources at our disposal. Then, we could build an 
``optimization device'' which maps each function $f(\cdot) \in \funclass{L}{U}$ to its unique minimum $\vx_{0}$. 
Obviously, this approach is infeasible since we cannot perfectly 
represent such a mapping, let alone its domain $\funclass{L}{U}$, using a physical hardware  
which allows us only to handle finite sets instead of continuous spaces like $\funclass{L}{U}$. 

Let us further illustrate the usefulness of using a computational model in the context of machine 
learning from massive data sets (big data). In particular, as we have seen in the previous section, 
the regularized linear regression model \eqref{equ_lrproblem} amounts 
to minimizing a convex quadratic function \eqref{equ_quadratic_function} with the particular choices \eqref{equ_quadratic_LR}. 
Even for this most simple machine learning model, it is typically infeasible to have access to a 
complete description of the objective function \eqref{equ_quadratic_function}. 

Indeed, in order to fully specify the quadratic function in \eqref{equ_quadratic_function}, we need to fully specify the matrix 
$\mathbf{Q}\in \mathbb{R}^{\vecdim \times \vecdim}$ and the vector $\vq \in \mathbb{R}^{\vecdim}$. 
For the (regularized) linear regression model \eqref{equ_lrproblem} this would require to 
compute $\mathbf{Q}_{\rm LR}$ (cf.\ \eqref{equ_quadratic_LR}) from the training data $\dataset = \{ \vz^{(i)} \}_{i=1}^{\samplesize}$. 
Computing the matrix $\mathbf{Q}_{\rm LR}$ in a naive way, i.e., without exploiting any additional structure, amounts to 
a number of arithmetic operations on the order of $\samplesize \cdot \vecdim^{2}$. This might be prohibitive in a typical big data application 
with $\samplesize$ and $\vecdim$ being on the order of billions and using distributed 
storage of the training data $\dataset$ \cite{EusipcoTutBigDat}. 
\begin{figure}
	\begin{center}
		\hspace*{-25mm}
		\begin{pspicture}(-0.4,-0.4)(4.3,2.4)
		\psset{unit=1cm}
		\psline[linewidth=1pt]{->}(0,1.5)(0.5,0)
		\psline[linewidth=1pt]{->}(1,0)(1.5,1.7)
		\psline[linewidth=1pt]{->}(3,1.5)(3.5,0)
		\psline[linewidth=1pt]{->}(4,0)(4.5,1.7)
		\psline[linewidth=0.5pt]{}(-0.3, 0.7)(6,0.7)
   		\rput[tl](-0.4,3.5){ $\min_{\vx} f(\vx)\!\defeq\!\frac{1}{N} \sum_{i=1}^{N} (y^{(i)}\!-\!\vx^{T} \vd^{(i)})^{2}\!+\!\lambda \| \vx \|^{2}$}
	        \rput[tl](-0.4,2.8){FOM $\vx^{(k+1)} \in {\rm span}\{\vx^{(0)},\nabla f(\vx^{(0)}),\ldots,\nabla f(\vx^{(k)}) \}$}
		\rput[tl](-0.4,2.2){$\vx^{(0)}$}
		\rput[tl](1,2.2){$\nabla f(\vx^{(0)})$}
		\rput[tl](2.7,2.2){$\vx^{(1)}$}
		\rput[lt](4.7,1.6){\parbox[c]{3cm}{``application'' layer (e.g., Python routine)}}
		\rput[lt](4.7,0.5){\parbox[c]{3cm}{``data'' layer (e.g., Hadoop)}}
		\rput[tl](3.7,2.2){$\nabla f(\vx^{(1)})$}
			\rput(2.5,-0.7){\psovalbox[linecolor = black,linewidth=0.5pt]{\parbox[c]{4.7cm}{raw training data $\dataset=\{\vd^{(i)},y^{(i)} \}_{i=1}^{N}$.}}}
		\end{pspicture}
	\end{center}
	\vspace*{3mm}
\caption{Programming model underlying a FOM.}
\label{fig_FOM}
\end{figure}
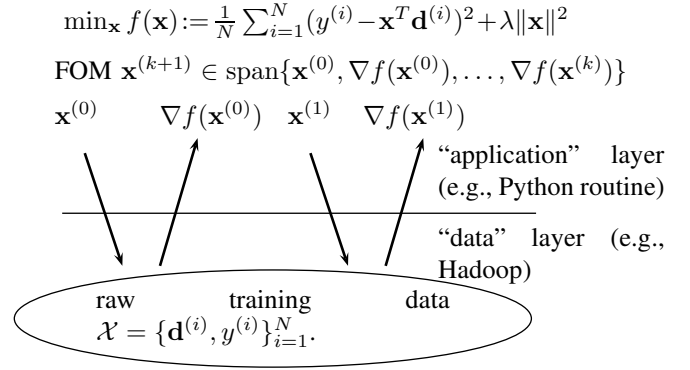

There has emerged a widely accepted computational model for convex optimization which abstracts 
away the details of the computational (hard- and software) infrastructure. Within this computational 
model, an optimization method for solving \eqref{equ_opt_problem} is not provided with a complete 
description of the objective function, but rather it can access the objective function only via an ``oracle'' \cite{nestrov04,Cevher14}. 

We might think of an oracle model as an application programming interface (API), which specifies the format 
of queries which can be issued by a convex optimization method executed on an application layer (cf.\ Figure \ref{fig_FOM}). 
There are different types of oracle models but one of the most popular type (in particular for big data applications) 
is a first order oracle \cite{nestrov04}. Given a query point $\vx \in \mathbb{R}^{\vecdim}$, a first order 
oracle returns the gradient $\nabla f(\vx)$ of the objective function at this particular point. 

A first order method (FOM) aims at solving \eqref{equ_opt_problem} by sequentially querying a first 
order oracle, at the current iterate $\vx^{(k)}$, to obtain the gradient $\nabla f(\vx^{(k)})$ (cf.\ Figure \ref{fig_FOM}). 
Using the current and past information obtained from the oracle, a FOM then constructs the new 
iterate $\vx^{(k+1)}$ such that eventually $\lim_{k \rightarrow \infty} \vx^{(k)} = \vx_{0}$.
For the sake of simplicity and without essential loss in generality, we will only 
consider FOMs whose iterates $\vx^{(k)}$ satisfy \cite{nestrov04}
\begin{equation}
\label{equ_first_order_method}
\hspace*{-5mm}\vx^{(k)} \!\in\! {\rm span} \big\{ \vx^{(0)},\nabla f(\vx^{(0)}), \ldots,\nabla f(\vx^{(k-1)}) \big\}. 
\end{equation}

\section{Lower Bounds on Number of Iterations}
\label{sec_lower_bound} 

According to Section \ref{sec_gradient_descent}, solving \eqref{equ_opt_problem} can be 
accomplished by the simple GD iterations \eqref{equ_iteration_GD}. The particular choice 
$\alpha^{*}$ \eqref{equ_opt_step_size} for the step size $\alpha$ in \eqref{equ_iteration_GD} 
ensures the convergence rate $\big(\frac{\kappa\!-\!1}{\kappa\!+\!1}\big)^{k}$ 
with the condition number $\kappa=U/L$ of the function class $\funclass{L}{U}$. 
While this convergence is quite fast, i.e., the error decays exponentially 
with iteration number $k$, we would, of course, like to know how efficient this method is in general. 

As detailed in Section \ref{sec_FOM}, in order to study the computational complexity and efficiency 
of convex optimization methods, we have to define a computational model such as those underlying 
FOMs (cf.\ Figure \ref{fig_FOM}). 
The next result provides a fundamental lower bound on the convergence rate of any FOM 
(cf.\ \eqref{equ_first_order_method}) for solving \eqref{equ_opt_problem}.
\begin{theorem}
\label{thm_lower_bound_FOM}
Consider a particular FOM, which for a given convex function $f(\cdot) \in \funclass{L}{U}$  
generates iterates $\vx^{(k)}$ satisfying \eqref{equ_first_order_method}. 
For fixed $L,U$ there is a sequence of functions $f_{\vecdim}(\cdot) \in \funclass{L}{U}$ 
(indexed by dimension $\vecdim$) such that   
\begin{equation}
\label{equ_lower_bound}
\| \vx^{(k)}\!-\!\vx_{0} \| \geq  \| \vx^{(0)}\!-\!\vx_{0} \|  \frac{1\!-\!1/\sqrt{\kappa}}{1\!+\!\sqrt{\kappa}} \bigg(\frac{\sqrt{\kappa}\!-\!1}{\sqrt{\kappa}\!+\!1}\bigg)^{k} - |\delta(\vecdim)|
\end{equation} 
with a sequence $\delta(n)$ such that $\lim_{\vecdim \rightarrow \infty} |\delta(\vecdim)| =0$. 
\end{theorem}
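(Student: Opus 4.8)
The plan is to establish the classical Nesterov lower bound for first-order methods on the class of smooth strongly convex functions, adapted to the iterate-error (rather than objective-value) formulation. First I would reduce to the case of minimizing a quadratic function, which is legitimate because $\funclass{L}{U}$ contains quadratics of the form \eqref{equ_quadratic_function} with $\lambda(\mathbf{Q}) \in [L,U]$, and because the span restriction \eqref{equ_first_order_method} is exactly the constraint that makes such hard instances work. The construction is the standard one: for each dimension $\vecdim$, take $f_{\vecdim}(\vx) = (1/2)\vx^{T} \mathbf{Q}_{\vecdim} \vx - \vb^{T} \vx$ where $\mathbf{Q}_{\vecdim}$ is (a rescaling of) the tridiagonal second-difference matrix with $2$ on the diagonal and $-1$ on the off-diagonals, shifted and scaled so that its eigenvalues lie in $[L,U]$ with the extreme eigenvalues approaching $L$ and $U$ as $\vecdim \to \infty$, and $\vb = (1,0,\ldots,0)^{T}$ (suitably scaled). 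Without loss of generality take $\vx^{(0)} = \mathbf{0}$.

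The key structural step is the observation that for this tridiagonal operator, $\nabla f_{\vecdim}(\vx) = \mathbf{Q}_{\vecdim}\vx - \vb$ has the property that if $\vx$ is supported on the first $j$ coordinates, then $\nabla f_{\vecdim}(\vx)$ is supported on the first $j+1$ coordinates. Combined with the span condition \eqref{equ_first_order_method} and $\vx^{(0)}=\mathbf{0}$, an easy induction shows $\vx^{(k)}$ is supported on the first $k$ coordinates, i.e. $x^{(k)}_{l} = 0$ for $l > k$. Meanwhile the exact minimizer $\vx_{0} = \mathbf{Q}_{\vecdim}^{-1}\vb$ has all coordinates nonzero, with $(\vx_{0})_{l}$ decaying geometrically in $l$ at rate governed by $\sqrt{\kappa}$; this comes from solving the linear recurrence associated with the tridiagonal system explicitly. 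Therefore $\|\vx^{(k)} - \vx_{0}\|^{2} \geq \sum_{l > k} (\vx_{0})_{l}^{2}$, and bounding this tail sum of a geometric-type sequence against the full sum $\|\vx_{0}\|^{2} = \|\vx^{(0)} - \vx_{0}\|^{2}$ produces a factor of the form $\big(\tfrac{\sqrt{\kappa}-1}{\sqrt{\kappa}+1}\big)^{2k}$ times the prefactor $\tfrac{1-1/\sqrt{\kappa}}{1+\sqrt{\kappa}}$ appearing in \eqref{equ_lower_bound} (after taking square roots and absorbing the geometric-series constants).

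Finally, the $|\delta(\vecdim)|$ term accounts for the fact that the eigenvalues of the finite tridiagonal matrix $\mathbf{Q}_{\vecdim}$ only approach the endpoints $L$ and $U$ in the limit $\vecdim \to \infty$ (they are $L + (U-L)\sin^{2}(\cdot)$-type expressions), and that the closed-form minimizer and its tail sums carry correction terms that are negligible as $\vecdim$ grows. I would collect all such discrepancies into a single error sequence and verify $\lim_{\vecdim \to \infty} |\delta(\vecdim)| = 0$ by the explicit formulas. The main obstacle I anticipate is the bookkeeping in the explicit solution of the tridiagonal system: getting the decay rate of $(\vx_{0})_{l}$ pinned down to exactly the constant $\tfrac{\sqrt{\kappa}-1}{\sqrt{\kappa}+1}$ and the prefactor to exactly $\tfrac{1-1/\sqrt{\kappa}}{1+\sqrt{\kappa}}$ requires care with the boundary conditions of the recurrence and with how the finite truncation at dimension $\vecdim$ perturbs the infinite-dimensional answer; everything else is a routine induction plus a geometric-series estimate.
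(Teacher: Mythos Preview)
Your proposal is correct and follows the classical Nesterov construction, but the paper takes a genuinely different route. You use the \emph{tridiagonal} second-difference matrix, solve the associated three-term recurrence for the minimizer, and bound the error via the tail sum $\sum_{l>k}(x_{0})_{l}^{2}$; your $\delta(\vecdim)$ arises from the finite-size boundary effects in the recurrence and from the eigenvalues of $\mathbf{Q}_{\vecdim}$ only reaching $L,U$ in the limit. The paper instead builds $f_{\vecdim}$ from a \emph{circulant} matrix (first row $(2,-1,0,\ldots,0,-1)$ scaled and shifted), so that the eigenvectors are the DFT basis and the eigenvalues are known in closed form for every $\vecdim$. It then lower-bounds $\|\vx^{(k)}-\vx_{0}\|$ by the single entry $|x_{0,k+1}|$, expresses that entry as a DFT sum, interprets the sum as a Riemann approximation to an explicit trigonometric integral, and evaluates the integral via partial fractions (Lemma~\ref{lem_identiy_integral}); here $\delta(\vecdim)$ is precisely the Riemann-sum discretization error. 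Your approach has the advantage that the ``support grows by one coordinate per iteration'' argument is completely clean for a genuinely tridiagonal operator, and it avoids the contour-integral computation; the paper's approach buys exact spectral data at every $\vecdim$ and reduces the asymptotics to a single classical integral rather than a recurrence with boundary terms.
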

\begin{proof}
see Section \ref{proof_them_lower_bound}.
\end{proof}
 
There is a considerable gap between the upper bound \eqref{equ_upper_bound_GD} 
on the error achieved by GD after $k$ iterations and the lower bound \eqref{equ_lower_bound} 
which applies to any FOM which is run for the same number iterations. 
In order to illustrate this gap, we have plotted in Figure \ref{fig_gap_upper_lower} 
the upper and lower bound for the (quite moderate) condition number $\kappa\!=\!100$. 
\begin{figure}
\begin{center}
\hspace*{2mm}
\psset{xunit=0.7cm,yunit=2cm}
\begin{pspicture}(-0.4,-0.1)(10.8,2.2)
\psaxes[labels=none]{->}(0,0)(-0.3,-0.1)(10.8,2.2)
 \psplot[algebraic,linecolor=red,linewidth=2pt]{0}{10}{2*(9/11)^(10*x)}
  \psplot[algebraic,linecolor=red,linewidth=2pt,linestyle=dashed]{0}{10}{2*(99/101)^(10*x)}
\rput[tl](11,0){$k$}
\rput[Br](-0.5,1){$1/2$}
\rput[Br](5.4,-0.3){$50$}
\rput[Br](10.4,-0.3){$100$}
\rput[Br](-0.5,2){$1$}
\rput[tl](5,2){$\kappa\!=\!100$}
\rput[tl](5,1.1){$\big(\frac{\kappa-1}{\kappa+1} \big)^{k}$ (GD error)}
\rput[tl](2,0.32){$\big(\frac{\sqrt{\kappa}-1}{\sqrt{\kappa}+1} \big)^{k}$ (lower bound)}
\end{pspicture}
\end{center}
\caption{Upper bound \eqref{equ_upper_bound_GD} on convergence rate of GD 
and lower bound \eqref{equ_lower_bound} on convergence rate for any FOM 
minimizing functions $f(\cdot) \!\in\! \funclass{L}{U}$ with condition number $\kappa\!=\!U/L\!=\!100$.}
\label{fig_gap_upper_lower}
\end{figure}
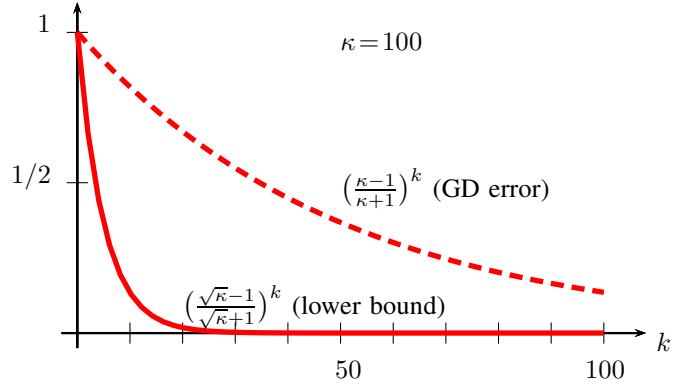

Thus, there might exist a FOM which converges faster than the GD method 
\eqref{equ_upper_bound_GD} and comes more close to the lower bound \eqref{equ_lower_bound}. 
Indeed, in the next section, we will detail how to obtain an accelerated FOM by applying a 
fixed point preserving transformation to the operator $\gdopopt$ (cf.\ \eqref{equ_fixed_point_iterations}), 
which is underlying the GD method \eqref{equ_iteration_GD}. This accelerated gradient 
method is known as the heavy balls (HB) method \cite{Polyak64} and effectively achieves 
the lower bound \eqref{equ_lower_bound}, i.e., the HB method is already optimal among 
all FOM's for solving \eqref{equ_opt_problem} with an objective function $f(\cdot) \in \funclass{L}{U}$.

\section{Accelerating Gradient Descent} 
\label{sec_AGD}

Let us now show how to modify the basic GD method \eqref{equ_iteration_GD} 
in order to obtain an accelerated FOM, whose convergence rate essentially matches the 
lower bound \eqref{equ_lower_bound} for the function class $\funclass{L}{U}$ with 
condition number $\kappa\!=\!U/L\!>\!1$ (cf.\ \eqref{equ_def_condition_number}) and is therefore optimal among all FOMs. 

Our derivation of this accelerated gradient method, which is inspired by the techniques used in \cite{GadimiShames2013}, 
starts from an equivalent formulation of GD as the fixed-point iteration 
\begin{equation}
\label{equ_stacked_fixed_point}
\bar{\vx}^{(k)} = \stackedgdop \bar{\vx}^{(k-1)} 
\end{equation} 
with the operator 
\begin{equation}
\label{equ_def_stackop}
 \hspace*{-3mm}\stackedgdop\!:\!\mathbb{R}^{2\vecdim} \!\rightarrow\! \mathbb{R}^{2\vecdim}: 
 \begin{pmatrix} \vu \\ \vv \end{pmatrix} \!\mapsto\!  \begin{pmatrix}  \vu\!-\!\alpha \nabla \vu \\  \vu  \end{pmatrix}
 =\begin{pmatrix}  \gdopopt \vu \\  \vu  \end{pmatrix}.
\end{equation}

As can be verified easily, the fixed-point iteration \eqref{equ_stacked_fixed_point} starting from an arbitrary initial guess 
$\bar{\vx}^{(0)} = \begin{pmatrix} \vz^{(0)} \\ \vy^{(0)} \end{pmatrix}$ is related to the GD iterate $\vx^{(k)}$ (cf.\ \eqref{equ_iteration_GD}), 
using initial guess $\vz^{(0)}$, as
\begin{equation}
\label{equ_equiv}
\bar{\vx}^{(k)} = \begin{pmatrix}  \vx^{(k)} \\ \vx^{(k-1)} \end{pmatrix}
\end{equation}
for all iterations $k \geq 1$. 

By the equivalence \eqref{equ_equiv}, Theorem \ref{equ_theorem_GD_convergence} implies 
that for any initial guess $\bar{\vx}^{(0)}$ the iterations \eqref{equ_stacked_fixed_point} 
converge to the fixed point 
\begin{equation} 
\label{eq_fixed_points_stacked}
\bar{\vx}_{0} \defeq \begin{pmatrix} \vx_{0} \\ \vx_{0} \end{pmatrix} \in \mathbb{R}^{2 \vecdim} 
\end{equation} 
with $\vx_{0}$ being the unique minimizer of $f(\cdot) \in \funclass{L}{U}$. 
Moreover, the convergence rate of the fixed-point iterations \eqref{equ_stacked_fixed_point} is precisely  
the same as those of the GD method, i.e., governed by the decay of $\big(\frac{\kappa-1}{\kappa+1}\big)^{k}$, 
which is obtained for the optimal step size $\alpha=\alpha^{*}$ (cf.\ \eqref{equ_opt_step_size}). 

We will now modify the operator $\stackedgdop$ in \eqref{equ_def_stackop} to obtain a new operator 
$\mathcal{M}: \mathbb{R}^{2\vecdim} \!\rightarrow\! \mathbb{R}^{2\vecdim}$ which has the 
same fixed points \eqref{eq_fixed_points_stacked} but improved contraction behaviour, i.e., the 
fixed point iteration
\begin{equation}
\label{equ_fixed_point_acc}
\tilde{\vx}^{(k)} = \mathcal{M} \tilde{\vx}^{(k-1)}, 
\end{equation}
will converge faster than those obtained from $\stackedgdop$ in \eqref{equ_stacked_fixed_point}. 
In particular, this improved operator $\mathcal{M}$ is defined as 
\begin{equation}
\label{equ_def_AGM}
 \hspace*{-3mm}\mathcal{M}\!:\!\mathbb{R}^{2\vecdim} \!\rightarrow\! \mathbb{R}^{2\vecdim}: \begin{pmatrix} \vu \\ \vv \end{pmatrix} \!\mapsto\!  
 \begin{pmatrix}  \vu\!-\!\tilde{\alpha} \nabla \vu + \tilde{\beta} (\vu - \vv) \\  \vu  \end{pmatrix}, 
\end{equation}
with 
\begin{equation}
\label{equ_def_tilde_apha_beta}
\tilde{\alpha} \defeq \frac{4}{(\sqrt{U}\!+\!\sqrt{L})^2}  \mbox{, and } \tilde{\beta} \defeq  \bigg[ \frac{\sqrt{U}\!-\!\sqrt{L}}{\sqrt{U}\!+\!\sqrt{L}} \bigg]^{2}.   
\end{equation}
As can be verified easily, the fixed point $\big(\vx^{T}_{0},\vx^{T}_{0}\big)^{T}$ of $\stackedgdop$ is also a fixed point of 
$\mathcal{M}$.
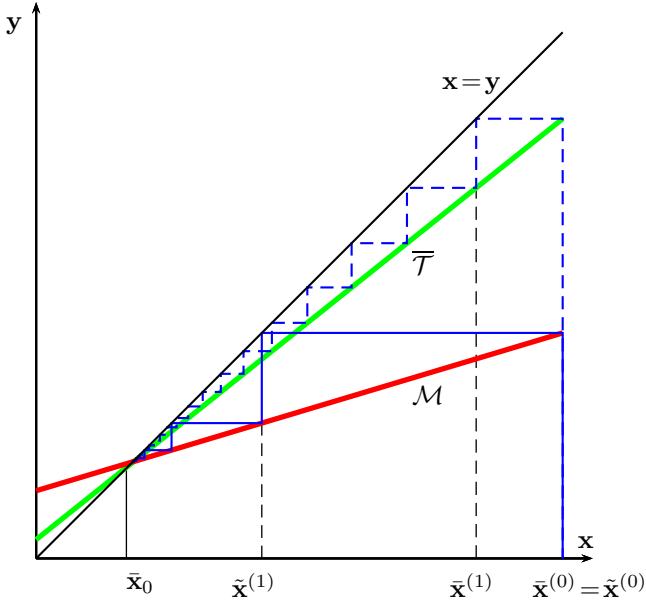
\begin{figure}
\begin{pspicture}[algebraic](-5mm,-1cm)(7,7)
  \psaxes[labels=none,ticks=none]{->}(7.4,7.4)
  \rput[tl](5.4,6.4){$\vx\!=\!\vy$}
\rput[tl](6.6,-0.2){$\bar{\vx}^{(0)}\!=\!\tilde{\vx}^{(0)}$}
\rput[tl](2.6,-0.2){$\tilde{\vx}^{(1)}$}
 \rput[tl](7.2,0.3){$\vx$}
 \rput[tl](-0.4,7.2){$\vy$}
 \rput[tl](5.5,-0.2){$\bar{\vx}^{(1)}$}
   \rput[tl](1.2,-0.2){$\bar{\vx}_{0}$}
      \psline[linewidth=0.5pt,linestyle=dashed](3,0)(3,1.8)
\psline[linewidth=0.5pt](1.2,1.2)(1.2,0)
\psline[linewidth=0.5pt,linestyle=dashed](5.85,0)(5.85,4.95)
  \psplot[linecolor=green,linewidth=2pt]{0}{7}{5/20+8*x/10}
  \rput[tl](5,4.1){$\stackedgdop$}
    \rput[tl](5,2.3){$\mathcal{M}$}
    \psplot[linecolor=red,linewidth=2pt]{0}{7}{9/10+3*x/10}
      \psFixpoint[linecolor=blue]{7}{9/10+3*x/10}{20}
            \psFixpoint[linecolor=blue,linestyle=dashed]{7}{5/20+8*x/10}{20}
  \psline(7,7)
\label{equ_fig_AGD_fixed_point}
\end{pspicture}
\caption{Schematic illustration of the fixed-point iteration using operator $\stackedgdop$ \eqref{equ_def_stackop} (equivalent to GD) and for the 
modified operator $\mathcal{M}$ \eqref{equ_def_AGM} (yielding HB method).}
\end{figure}

Before we analyze the convergence rate of the fixed-point iteration \eqref{equ_fixed_point_acc}, 
let us work out explicitly the FOM which is represented by the fixed-point iteration \eqref{equ_fixed_point_acc}. 
To this end, we partition the $k$th iterate, for $k \geq 1$, as 
\begin{equation} 
\label{equ_partition_HB}
\tilde{\vx}^{(k)} \defeq \begin{pmatrix} \vx_{\rm HB}^{(k)} \\ \vx_{\rm HB}^{(k-1)} \end{pmatrix}. 
\end{equation}
Inserting \eqref{equ_partition_HB} into \eqref{equ_fixed_point_acc}, we have for $k \geq 1$
\begin{equation}
\label{equ_iteration_HB}
\vx_{\rm HB}^{(k)} = \vx_{\rm HB}^{(k-1)}\!-\!\tilde{\alpha} \nabla f(\vx_{\rm HB}^{(k-1)})\!+\!\tilde{\beta} (\vx_{\rm HB}^{(k-1)}\!-\!\vx_{\rm HB}^{(k-2)})
\end{equation}
with the convention $\vx_{\rm HB}^{(-1)} \defeq \mathbf{0}$. The iteration \eqref{equ_iteration_HB} 
defines the HB method \cite{Polyak64} for solving the optimization problem \eqref{equ_opt_problem}. 
As can be verified easily, like the GD method, the HB method is a FOM. 
However, contrary to the GD iteration \eqref{equ_iteration_GD}, the HB iteration \eqref{equ_iteration_HB} 
also involves the penultimate iterate $\vx_{\rm HB}^{(k-2)}$ for determining the new iterate $\vx_{\rm HB}^{(k)}$. 

We will now characterize the converge rate of the HB method \eqref{equ_iteration_HB} 
via its fixed-point equivalent \eqref{equ_fixed_point_acc}. To this end, we restrict 
ourselves to the subclass of $\funclass{L}{U}$ given by quadratic functions 
of the form \eqref{equ_quadratic_function}. 
\begin{theorem}
\label{theorem_HB_convergence}
Consider the optimization problem \eqref{equ_opt_problem} with objective function 
$f(\cdot) \in \funclass{L}{U}$ which is a quadratic \eqref{equ_quadratic_function}. 
Starting from an arbitrarily chosen initial guess $\vx_{\rm HB}^{(-1)}$ and $\vx_{\rm HB}^{(0)}$, 
we construct a sequence $\vx_{\rm HB}^{(k)}$ via iterating \eqref{equ_iteration_GD}. Then, 
\begin{equation}
\label{equ_upper_bound_HB}
 \| \vx_{\rm HB}^{(k)}\!-\! \vx_{0} \| \!\leq\!  C(\kappa) k \bigg(\frac{\sqrt{\kappa}\!-\!1}{\sqrt{\kappa}\!+\!1} \bigg)^{k} 
 (\|\vx_{\rm HB}^{(0)}\!-\! \vx_{0} \|\!+\!\|\vx_{\rm HB}^{(-1)}\!-\!\vx_{0} \|). 
\end{equation}
with 
\begin{equation}
C(\kappa) \defeq 4  (2\!+\!2 \tilde{\beta}\!+\!\tilde{\alpha}) \frac{\sqrt{\kappa}\!+\!1}{\sqrt{\kappa}\!-\!1}.  \nonumber
\end{equation}
\end{theorem}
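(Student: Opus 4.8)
The plan is to analyze the HB iteration \eqref{equ_iteration_HB} for a quadratic $f(\vx) = (1/2)\vx^{T}\mathbf{Q}\vx + \vq^{T}\vx + c$ through the fixed-point iteration \eqref{equ_fixed_point_acc} with operator $\mathcal{M}$, exactly as the GD analysis in Section~\ref{sec_gradient_descent} went through $\stackedgdop$. For a quadratic, $\nabla f(\vu) = \mathbf{Q}\vu + \vq$ is \emph{affine}, so $\mathcal{M}$ is an affine map on $\mathbb{R}^{2\vecdim}$: writing $\tilde{\ve}^{(k)} \defeq \tilde{\vx}^{(k)} - \bar{\vx}_{0}$ for the error relative to the fixed point $\bar{\vx}_{0} = (\vx_{0}^{T},\vx_{0}^{T})^{T}$, the affine parts cancel (since $\bar{\vx}_{0}$ is a fixed point of $\mathcal{M}$) and we get a purely linear recursion $\tilde{\ve}^{(k)} = \mathbf{B}\, \tilde{\ve}^{(k-1)}$ with the $2\vecdim \times 2\vecdim$ block matrix
\begin{equation}
\mathbf{B} \defeq \begin{pmatrix} (1+\tilde{\beta})\mathbf{I} - \tilde{\alpha}\mathbf{Q} & -\tilde{\beta}\mathbf{I} \\ \mathbf{I} & \mathbf{0} \end{pmatrix}. \nonumber
\end{equation}
Hence $\| \vx_{\rm HB}^{(k)} - \vx_{0}\| \le \|\tilde{\ve}^{(k)}\| \le \|\mathbf{B}^{k}\| \, \|\tilde{\ve}^{(0)}\|$, and $\|\tilde{\ve}^{(0)}\| \le \|\vx_{\rm HB}^{(0)} - \vx_{0}\| + \|\vx_{\rm HB}^{(-1)} - \vx_{0}\|$, which already produces the factor multiplying $C(\kappa)$ on the right-hand side of \eqref{equ_upper_bound_HB}. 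So everything reduces to bounding $\|\mathbf{B}^{k}\|$.

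Next I would diagonalize the problem. Using the spectral decomposition $\mathbf{Q} = \mathbf{U}\bm{\Lambda}\mathbf{U}^{T}$, conjugating $\mathbf{B}$ by the block matrix $\mathrm{diag}(\mathbf{U},\mathbf{U})$ and permuting coordinates shows $\mathbf{B}$ is similar to a block-diagonal matrix with $\vecdim$ blocks of size $2\times 2$, one for each eigenvalue $\lambda_{l} = \lambda_{l}(\mathbf{Q}) \in [L,U]$:
\begin{equation}
\mathbf{B}_{l} \defeq \begin{pmatrix} 1 + \tilde{\beta} - \tilde{\alpha}\lambda_{l} & -\tilde{\beta} \\ 1 & 0 \end{pmatrix}. \nonumber
\end{equation}
Thus $\|\mathbf{B}^{k}\| \le \max_{l} \|\mathbf{B}_{l}^{k}\|$ up to the (dimension-independent) conditioning of the similarity transform, which is orthogonal and hence costs nothing. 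The characteristic polynomial of $\mathbf{B}_{l}$ is $t^{2} - (1+\tilde{\beta}-\tilde{\alpha}\lambda_{l})t + \tilde{\beta}$; the key algebraic fact — and this is where the specific choice \eqref{equ_def_tilde_apha_beta} of $\tilde{\alpha},\tilde{\beta}$ earns its keep — is that for every $\lambda_{l} \in [L,U]$ the discriminant $(1+\tilde{\beta}-\tilde{\alpha}\lambda_{l})^{2} - 4\tilde{\beta}$ is nonpositive, so both eigenvalues of $\mathbf{B}_{l}$ are complex conjugates of modulus $\sqrt{\tilde{\beta}} = \frac{\sqrt{\kappa}-1}{\sqrt{\kappa}+1}$. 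I would verify this by checking the two endpoints $\lambda_{l}=L$ and $\lambda_{l}=U$ (where $1+\tilde{\beta}-\tilde{\alpha}\lambda_{l} = \pm 2\sqrt{\tilde{\beta}}$, giving a double real root on the boundary) and noting the discriminant is a concave quadratic in $\lambda_{l}$.

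The remaining, and genuinely technical, step is to pass from $\rho(\mathbf{B}_{l}) = \sqrt{\tilde{\beta}}$ to a bound on $\|\mathbf{B}_{l}^{k}\|$. Because $\mathbf{B}_{l}$ is a fixed $2\times 2$ matrix that is not normal, $\|\mathbf{B}_{l}^{k}\|$ is not simply $\rho(\mathbf{B}_{l})^{k}$; the standard estimate gives $\|\mathbf{B}_{l}^{k}\| \le C' k\, \rho(\mathbf{B}_{l})^{k}$, where the linear factor $k$ is exactly what appears in \eqref{equ_upper_bound_HB} and reflects the near-defective (Jordan-block) behaviour at the endpoints $\lambda_{l}\in\{L,U\}$. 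I would obtain this either via the Cayley--Hamilton / generating-function route — solving the scalar recursion $p_{k} = (1+\tilde{\beta}-\tilde{\alpha}\lambda_{l})p_{k-1} - \tilde{\beta} p_{k-2}$ and bounding $\mathbf{B}_{l}^{k}$ entrywise using $\mathbf{B}_{l}^{k} = p_{k-1}\mathbf{B}_{l} - \tilde{\beta} p_{k-2}\mathbf{I}$ — or by writing $\mathbf{B}_{l} = \mathbf{S}_{l}(\sqrt{\tilde{\beta}}\,\mathbf{R}_{l})\mathbf{S}_{l}^{-1}$ with $\mathbf{R}_{l}$ a rotation and tracking $\|\mathbf{S}_{l}\|\|\mathbf{S}_{l}^{-1}\|$; the entrywise route is cleaner and makes the constant explicit. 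Collecting the constants and using $\tilde{\alpha},\tilde{\beta}\le$ their obvious bounds gives the stated $C(\kappa) = 4(2+2\tilde{\beta}+\tilde{\alpha})\frac{\sqrt{\kappa}+1}{\sqrt{\kappa}-1}$. The main obstacle is precisely this non-normality: one must be careful that the constant absorbing $\|\mathbf{S}_{l}\|\|\mathbf{S}_{l}^{-1}\|$ (or the coefficients in the entrywise bound) stays uniform over all $\lambda_{l}\in[L,U]$ and does not blow up as $\lambda_{l}$ approaches the endpoints where $\mathbf{B}_{l}$ becomes defective — which is why the bound carries the extra factor $k$ rather than being a clean geometric decay.
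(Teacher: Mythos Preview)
Your proposal is correct and follows essentially the same route as the paper: set up the linear error recursion governed by the $2\vecdim\times 2\vecdim$ block matrix, block-diagonalize it via the eigenvectors of $\mathbf{Q}$ into $2\times 2$ blocks $\mathbf{B}_{l}$, verify $\rho(\mathbf{B}_{l})=\sqrt{\tilde{\beta}}=(\sqrt{\kappa}-1)/(\sqrt{\kappa}+1)$, and then bound $\|\mathbf{B}_{l}^{k}\|$ by $Ck\rho^{k}$ to absorb the non-normality. The only minor difference is the tool for that last step: the paper invokes a Schur-decomposition lemma (Lemma~\ref{lem_power_M_decomp}) giving $\mathbf{M}^{k}=\mathbf{U}\begin{pmatrix}\lambda_{1}^{k}&d\\0&\lambda_{2}^{k}\end{pmatrix}\mathbf{U}^{H}$ with $|d|\le k(|a|+|b|+1)\rho^{k-1}$, whereas you propose Cayley--Hamilton or an explicit diagonalization; all three yield the same $k\rho^{k-1}$ growth and the stated constant.
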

\begin{proof}
see Section \ref{sec_proof_theorem_HB_convergence}.
\end{proof}
The upper bound \eqref{equ_upper_bound_HB} differs from the lower bound 
\eqref{equ_lower_bound} by the factor $k$. However, the discrepancy 
is rather decent as this linear factor in \eqref{equ_upper_bound_HB} grows much slower 
than the exponential $\big(\frac{\sqrt{\kappa}\!-\!1}{\sqrt{\kappa}\!+\!1} \big)^{k}$ in \eqref{equ_upper_bound_HB} decays. 
In Figure \ref{fig_gap_upper_lower_HB}, we depict the upper bound \eqref{equ_upper_bound_HB} 
on the error of the HB iterations \eqref{equ_iteration_HB} along with the upper bound \eqref{equ_upper_bound_GD} 
on the error of the GD iterations \eqref{equ_iteration_GD} and the lower bound \eqref{equ_lower_bound} 
on the error of any FOM after $k$ iterations. 

We highlight that, strictly speaking, the bound \eqref{equ_upper_bound_HB} only applies to a subclass 
of smooth strongly convex functions $f(\cdot) \in \funclass{L}{U}$, i.e., it applies 
only to quadratic functions of the form \eqref{equ_quadratic_function}. However, 
as discussed in Section \ref{sec_cvx_functions}, given a particular point $\vx_{}$, we can approximate 
an arbitrary function $f(\cdot) \in \funclass{L}{U}$ with a quadratic function $\tilde{f}(\vx)$ 
of the form \eqref{equ_quadratic_function}. The approximation error $\varepsilon(\vx)$ (cf.\ \eqref{equ_approx_error_quadratic}) 
will be small for all points $\vx$ sufficiently close to $\vx_{0}$. Making this reasoning 
more precise and using well-known results on fixed-point iterations with inexact updates \cite{Alfeld82}, 
one can verify that the bound \eqref{equ_upper_bound_HB} essentially applies to any function 
$f(\cdot) \in  \funclass{L}{U}$. 
\begin{figure}
\begin{center}
\hspace*{0mm}
\psset{xunit=0.7cm,yunit=2cm}
\begin{pspicture}(-0.4,-0.1)(10.8,2.2)
\psaxes[labels=none]{->}(0,0)(-0.3,-0.1)(10.8,2.2)
 \psplot[algebraic,linecolor=red,linewidth=2pt]{1}{10}{(1/5)*log(x*20*(9/11)^(10*x))+1.8}
  \psplot[algebraic,linecolor=red,linewidth=2pt,linestyle=dotted]{1}{10}{(1/5)*log(2*(9/11)^(10*x))+1.8}
  \psplot[algebraic,linecolor=red,linewidth=2pt,linestyle=dashed]{1}{10}{(1/5)*log(2*(99/101)^(10*x))+1.8}
\rput[tl](11,0){$k$}
\rput{90}(-0.5,1){$\log \| \vx^{(k)}\!-\!\vx_{0} \|$}
\rput[Br](5.4,-0.3){$50$}
\rput[Br](10.4,-0.3){$100$}
\rput[tl](5,2.5){$\kappa\!=\!100$}
\rput[tl](8,2){GD \eqref{equ_upper_bound_GD}}
\rput[tl](2,0.6){lower bound \eqref{equ_lower_bound}}
\rput[tl](8,1.1){HB \eqref{equ_upper_bound_HB}}
\end{pspicture}
\end{center}
\caption{Dependence on iteration number $k$ of the upper bound \eqref{equ_upper_bound_HB} on error of 
HB (solid), upper bound \eqref{equ_upper_bound_GD} for error of GD (dashed) and lower bound \eqref{equ_lower_bound} 
(dotted) for FOMs for the function class $\funclass{L}{U}$ with condition number $\kappa\!=\!U/L\!=\!100$.}
\label{fig_gap_upper_lower_HB}
\end{figure}
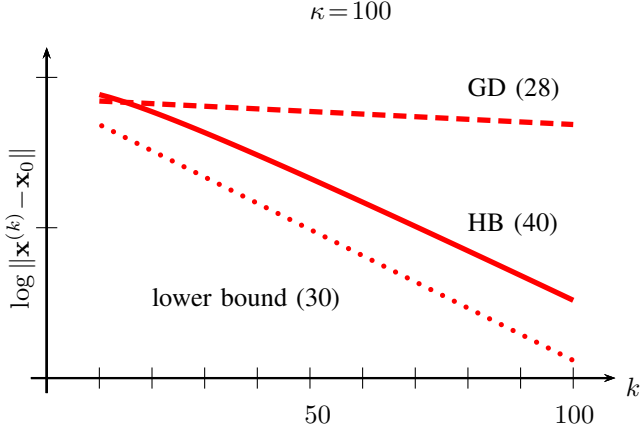



\section{Conclusions}
\label{sec5_conclusion}
We have presented a fixed-point theory of some basic gradient methods for minimizing convex 
functions. The approach via fixed-point theory allows for a rather elegant analysis of the 
convergence properties of these gradient methods. In particular, their convergence rate is 
obtained as the contraction factor for an operator associated with the objective function. 

The fixed-point approach is also appealing since it leads rather naturally to the acceleration 
of gradient methods via fixed-point preserving transformations of the underlying operator. 
We plan to further develop the fixed-point theory of gradient methods in order to accommodate 
stochastic variants of GD such as SGD. Furthermore, we can bring the popular class 
of proximal methods into the picture by replacing the gradient operator underlying GD with 
the proximal operator. 

However, by contrast to FOMs (such as the GD method), proximal methods 
use a different oracle model (cf.\ Figure \ref{fig_FOM}). In particular, proximal methods require an 
oracle which can evaluate the proximal mapping efficiently which is typically more expensive 
than gradient evaluations. Nonetheless, the popularity of proximal methods is due to the fact 
that for objective functions arising in many important machine learning applications, the proximal 
mapping can be evaluated efficiently. 

\section*{Acknowledgement}
This paper is a wrap-up of the lecture material created for the course \emph{Convex Optimization 
for Big Data over Networks}, taught at Aalto University in spring $2017$. The student feedback on 
the lectures has been a great help to develop the presentation of the contents. In particular, the 
detailed feedback of students Stefan Mojsilovic and Matthias Grezet on early versions of the paper 
is appreciated sincerely.

\section{Proofs of Main Results} 
In this section we present the (somewhat lengthy) proofs for the main results 
stated in Section \ref{sec_lower_bound} and Section \ref{sec_AGD}. 
\subsection{Proof of Theorem \ref{thm_lower_bound_FOM}} 
\label{proof_them_lower_bound}
Without loss of generality we consider FOM which use the 
initial guess $\vx^{(0)} = \mathbf{0}$. Let us now construct a function $f_{\vecdim}(\cdot)\in \funclass{L}{U}$ 
which is particularly difficult to optimize by a FOM (cf.\ \eqref{equ_first_order_method}) 
such as the GD method \eqref{equ_iteration_GD}. 
In particular, this function is the quadratic 
\begin{equation}
\label{equ_worst_func_ever}
\hat{f}(\vx) \defeq (1/2) \vx^{T}\mathbf{P} \vx + \tilde{\mathbf{q}}^{T} \vx 
\end{equation}
with vector 
\begin{equation}
\label{equ_def_tilde_vq}
\tilde{\vq} \defeq  \frac{L(\kappa\!-\!1)}{4}(1,0,\ldots,0)^{T} \in \mathbb{R}^{\vecdim}
\end{equation} 
and matrix 
\begin{equation}
\label{equ_def_matrix_P}
\mathbf{P} \defeq(L/4)(\kappa\!-\!1)\widetilde{\mathbf{Q}}\!+\!L \mathbf{I}  \in \mathbb{R}^{\vecdim \times \vecdim}. 
\end{equation}
The matrix $\widetilde{\mathbf{Q}}$ is defined row-wise by successive circular shifts of 
its first row 
\begin{equation} 
\label{equ_def_first_row_tilde_q}
\tilde{\mathbf{q}} \defeq(2,-1,0,\ldots,0,-1)^{T} \in \mathbb{R}^{\vecdim}.
\end{equation} 
Note that the matrix $\mathbf{P}$ in \eqref{equ_def_matrix_P} is a circulant matrix 
\cite{GrayToepliz} with orthonormal eigenvectors 
$\big\{ \mathbf{u}^{(l)} \big\}_{l=1}^{\vecdim}$ given element-wise as 
\begin{equation} 
\label{equ_def_DFT_vector}
u^{(l)}_{i} \!=\! (1/\sqrt{\vecdim}) \exp( j 2 \pi (i\!-\!1)(l\!-\!1)/\vecdim).
\end{equation}  
The eigenvalues $\lambda_{l}(\mathbf{P})$ of the circulant matrix $\mathbf{P}$ are obtained as 
the discrete Fourier transform (DFT) coefficients of its first row \cite{GrayToepliz}
\begin{align} 
\label{equ_first_row_p}
\mathbf{p}& \!=\!\frac{L(\kappa\!-\!1)}{4}\tilde{\vq} \!+\!L\mathbf{e}_{1}^{T} \nonumber \\ 
& \!\stackrel{\eqref{equ_def_first_row_tilde_q}}{=}\!\frac{L(\kappa\!-\!1)}{4}(2,-1,0,\ldots,0,-1)\!+\!L(1,0,\ldots,0)^{T},
\end{align}
i.e., 
\begin{align}
\label{equ_eigvals_bad_matrix}
\lambda_{l}(\mathbf{P}) & = \sum_{i=1}^{\vecdim} p_{i}  \exp(- j 2 \pi (i\!-\!1)(l\!-\!1) / \vecdim) \\ \nonumber 
&  \stackrel{\eqref{equ_first_row_p}}{=} (L/2)(\kappa\!-\!1)(1\!-\!\cos (- 2 \pi (i\!-\!1) / \vecdim) + L.    
\end{align} 
Thus, $\lambda_{l}(\mathbf{P}) \in [L,U]$ and, in turn, $f_{\vecdim}(\cdot) \in \funclass{L}{U}$ (cf.\ \eqref{double_bound_hessian}). 

Consider the sequence $\vx^{(k)}$ generated by some FOM, i.e., which satisfies \eqref{equ_first_order_method}, 
for the particular objective function $f_{\vecdim}(\vx)$ (cf.\ \eqref{equ_worst_func_ever}) using initial guess $\vx_{0} = \mathbf{0}$. 
It can be verified easily that the $k$th iterate $\vx^{(k)}$ has only zero entries starting from index $k+1$, i.e., 
\begin{equation}
x^{(k)}_{l} = 0 \mbox{ for all } l \in \{k+1,\ldots,\vecdim\}.  \nonumber
\end{equation} 
This implies 
\begin{equation}
\label{equ_lower_bound_xo_entry}
\| \vx^{(k)} - \vx_{0} \| \geq |x_{0,k+1}|.
\end{equation} 
The main part of the proof is then to show that the minimizer $\vx_{0}$ for the 
particular function $f_{\vecdim}(\cdot)$ cannot decay too fast, i.e., we will derive 
a lower bound on $ |x_{0,k+1}|$. 

Let us denote the DFT coefficients of the finite length discrete time signal represented by the vector 
$\tilde{\mathbf{q}}$ as 
\begin{align}
\label{equ_DFT_bad_vector}
c_{l} & = \sum_{i=1}^{\vecdim} \tilde{q}_{i} \exp(- j 2 \pi (i-1)l / \vecdim) \nonumber \\ 
        & \stackrel{\eqref{equ_def_tilde_vq}}{=} (L/4)(\kappa-1).
\end{align} 
Using the optimality condition \eqref{equ_zero_gradient}, 
the minimizer for \eqref{equ_worst_func_ever} is  
\begin{equation} 
\label{equ_close_form_solution}
\vx_{0} = -  \mathbf{P}^{-1} \tilde{\mathbf{q}}. 
\end{equation} 
Inserting the spectral decomposition $\mathbf{P}= \sum\limits_{l=1}^{\vecdim} \lambda_{l} \vu^{(l)} \big(\vu^{(l)}\big)^{H}$ \cite[Theorem 3.1]{GrayToepliz}
of the psd matrix $\mathbf{P}$ into \eqref{equ_close_form_solution}, 
\begin{align}
x_{0,k} & = - \big( \mathbf{P}^{-1} \tilde{\mathbf{q}} \big)_{k} \nonumber \\ 
		      & \stackrel{\eqref{equ_def_DFT_vector}}{=} -(1/\vecdim) \sum_{i=1}^{\vecdim} (c_{i}/\lambda_{i})  \exp(j 2\pi (i\!-\!1) (k\!-\!1)/\vecdim) \nonumber \\ 
		      & \hspace*{-12mm}\stackrel{\eqref{equ_eigvals_bad_matrix},\eqref{equ_DFT_bad_vector}}{=} -\frac{1}{\vecdim} 
		      \sum_{i=1}^{\vecdim}  \frac{ \exp(j 2\pi (i\!-\!1) (k\!-\!1)/\vecdim)}{ 2(1\!-\!\cos (- 2 \pi (i\!-\!1) / \vecdim))\!+\!4/(\kappa\!-\!1)}. \label{equ_proof_lower_bound_112}
\end{align}
We will also need a lower bound on the norm $\| \vx_{0} \|$ of the minimizer of $f_{\vecdim}(\cdot)$. 
This bound can be obtained from \eqref{equ_close_form_solution} and $\lambda_{l}(\mathbf{P})\!\in\![L,U]$, i.e., 
$\lambda_{l} \big(\mathbf{P}^{-1} \big)\!\in\![1/U,1/L]$,  
\begin{equation}
\label{equ_lower_bound_x_0}
\| \vx_{0} \| \leq (1/L) \| \tilde{\vq}\| \stackrel{\eqref{equ_def_tilde_vq}}{=}  \frac{\kappa\!-\!1}{4}.
\end{equation}

The last expression in \eqref{equ_proof_lower_bound_112} is 
a Riemann sum for the integral $\int\limits_{\theta=0}^{1}  \frac{\exp(-j2\pi\theta)}{ 2(1\!-\!\exp(-j2\pi \theta))\!+\!4/(\kappa-1)}  d \theta$. 
Indeed, by basic calculus \cite[Theorem 6.8]{RudinBookPrinciplesMatheAnalysis}
\begin{align}
\label{equ_approx_solution_integeral}
x_{0,k} &= -\int\limits_{\theta=0}^{1}  \hspace*{-2mm}\frac{\exp(j2\pi (k\!-\!1) \theta)}{ 2(1\!-\!\cos(2\pi \theta))\!+\!4/(\kappa\!-\!1)}  d \theta \!+\! \delta(\vecdim)
\end{align} 
where the error $\delta(\vecdim)$ becomes arbitrarily small for 
sufficiently large $\vecdim$, i.e., $\lim\limits_{\vecdim \rightarrow \infty} |\delta(\vecdim)| = 0$.

According to Lemma \ref{lem_identiy_integral}, 
\begin{align} 
\hspace*{-2mm}\int\limits_{\theta=0}^{1}  \hspace*{-2mm}\frac{\exp(j2\pi (k\!-\!1) \theta)}{ 2(1\!-\!\cos(2\pi \theta))\!+\!4/(\kappa\!-\!1)}  d \theta\!=\!\frac{\kappa\!-\!1}{4 \sqrt{\kappa}} \bigg(\hspace*{-1mm}\frac{\sqrt{\kappa}\!-\!1}{\sqrt{\kappa}\!+\!1}\bigg)^{k},  \nonumber
\end{align}  
which, by inserting into \eqref{equ_approx_solution_integeral}, yields
\begin{equation}
\label{equ_bound_145656}
x_{0,k} = -\frac{\kappa\!-\!1}{4 \sqrt{\kappa}} \bigg(\frac{\sqrt{\kappa}\!-\!1}{\sqrt{\kappa}\!+\!1}\bigg)^{k}\!+\! \delta(\vecdim). 
\end{equation}
Putting together the pieces, 
\begin{align}
\| \vx^{(k)}\!-\!\vx_{0} \| & \stackrel{\eqref{equ_lower_bound_xo_entry}}{\geq} | x_{0,k+1} | \nonumber  \\[2mm]  
    & \hspace*{-15mm} \stackrel{\eqref{equ_bound_145656}}{\geq}  \frac{\kappa\!-\!1}{4 \sqrt{\kappa}} \bigg(\frac{\sqrt{\kappa}\!-\!1}{\sqrt{\kappa}\!+\!1}\bigg) \bigg(\frac{\sqrt{\kappa}\!-\!1}{\sqrt{\kappa}\!+\!1}\bigg)^{k}\!-\!|\delta(\vecdim)| \nonumber \\[2mm] 
    & \hspace*{-15mm} \stackrel{\eqref{equ_lower_bound_x_0}}{\geq}  \| \vx_{0} \| \frac{1\!-\!1/\sqrt{\kappa}}{1\!+\!\sqrt{\kappa}} \bigg(\frac{\sqrt{\kappa}\!-\!1}{\sqrt{\kappa}\!+\!1}\bigg)^{k}\!-\!|\delta(\vecdim)| \nonumber \\[2mm] 
    & \hspace*{-15mm} \stackrel{\vx^{(0)} = \mathbf{0}}{=}  \| \vx^{(0)}\!-\!\vx_{0} \| \frac{1\!-\!1/\sqrt{\kappa}}{1\!+\!\sqrt{\kappa}} \bigg(\frac{\sqrt{\kappa}\!-\!1}{\sqrt{\kappa}\!+\!1}\bigg)^{k}\!-\!|\delta(\vecdim)|.\nonumber
\end{align} 

\subsection{Proof of Theorem \ref{theorem_HB_convergence}}
\label{sec_proof_theorem_HB_convergence}

By evaluating the operator $\mathcal{M}$ (cf.\ \eqref{equ_def_AGM}) for a quadratic function $f(\cdot)$ 
of the form \eqref{equ_quadratic_function}, we can verify  
\begin{equation}
\label{equ_contraction_inequ_M}
\mathcal{M} \vx\!-\!\mathcal{M} \vy  = \mathbf{R} (\vx\!-\!\vy) 
\end{equation} 
with the matrix 
\begin{equation}
\label{equ_def_AGM_diff}
\mathbf{R}  =  \begin{pmatrix} (1\!+\!\tilde{\beta})\mathbf{I}\!-\!\tilde{\alpha} \mathbf{Q} & - \tilde{\beta} \mathbf{I} \\ \mathbf{I} & \mathbf{0} \end{pmatrix}. 
\end{equation}
This matrix $\mathbf{R}\in \mathbb{R}^{2 \vecdim \times 2 \vecdim}$ is a $2 \times 2$ block matrix whose individual blocks can be 
diagonalized simultaneously via the orthonormal eigenvectors $\mathbf{U}=\big(\vu^{(1)},\ldots,\vu^{(\vecdim)} \big)$ of the 
psd matrix $\mathbf{Q}$. Inserting the spectral decomposition $\mathbf{Q}\!=\!\mathbf{U} {\rm diag} \{ \lambda_{i} \}_{i=1}^{\vecdim} \mathbf{U}^{H}$ 
into \eqref{equ_def_AGM_diff},  
\begin{equation}
\label{equ_factorization_diff}
\mathbf{R}= \mathbf{U} \mathbf{P} \mathbf{B} \mathbf{P}^{H} \mathbf{U}^{H}, 
\end{equation}
with some (orthonormal) permutation matrix $\mathbf{P}$ and a block diagonal matrix 
\begin{equation}
\label{equ_def_block_diagonalB}
\mathbf{B}\!\defeq\!\begin{pmatrix} \mathbf{B}^{(1)} &  \ldots  & \mathbf{0} \\ 
\mathbf{0} & \ddots & \vdots \\
\mathbf{0} & \ldots & \mathbf{B}^{(\vecdim)} \end{pmatrix} 
\mbox{, } \mathbf{B}^{(i)}\!\defeq\!\begin{pmatrix} 1\!+\!\tilde{\beta}\!-\!\tilde{\alpha} \lambda_{i} & - \tilde{\beta} \\ 1 & 0 \end{pmatrix}. 
\end{equation} 
Combining \eqref{equ_factorization_diff} with \eqref{equ_contraction_inequ_M} 
and inserting into \eqref{equ_fixed_point_acc} yields 
\begin{equation} 
\label{equ_identiy_tilde_x_k_0}
\tilde{\vx}^{(k)}\!-\!\tilde{\vx}_{0}\!=\!\mathbf{U} \mathbf{P} \mathbf{B}^{k} \mathbf{P}^{H} \mathbf{U}^{H} (\tilde{\vx}^{(0)}\!-\!\tilde{\vx}_{0} ). 
\end{equation} 
In order to control the convergence rate of the iterations \eqref{equ_fixed_point_acc}, i.e., 
the decay of the error $\| \tilde{\vx}^{(k)}\!-\!\tilde{\vx}_{0}  \|$, we will now derive an upper bound 
on the spectral norm of the block diagonal matrix $\mathbf{B}^{k}$ (cf. \eqref{equ_def_block_diagonalB}). 

Due to the block diagonal structure \eqref{equ_def_block_diagonalB}, we can control the norm of 
$\mathbf{B}^{k}$ via controlling the norm of the powers of its diagonal blocks $\big(\mathbf{B}^{(i)}\big)^{k}$ since 
\begin{equation} 
\label{equ_power_B_k_B_i}
\| \mathbf{B}^{k} \| = \max_{i} \big\| \big( \mathbf{B}^{(i)} \big)^{k} \big\|. 
\end{equation} 
A pen and paper exercise reveals 
\begin{equation}
\label{equ_upper_bound_lambda_b_Bi}
 \rho \big(\mathbf{B}^{(i)}  \big)=  \tilde{\beta}^{1/2} \stackrel{\eqref{equ_def_tilde_apha_beta}}{=}  \frac{\sqrt{U}\!-\!\sqrt{L}}{\sqrt{U}\!+\!\sqrt{L}} =\frac{\sqrt{\kappa}\!-\!1}{\sqrt{\kappa}\!+\!1}. 
\end{equation}
Combining \eqref{equ_upper_bound_lambda_b_Bi} with Lemma \ref{lem_power_M_decomp} yields
\begin{equation}
\label{equ_expr_B_i_bounding}
\big( \mathbf{B}^{(i)} \big)^{k} = \begin{pmatrix} \lambda_{1}^{k} & d \\ 0 & \lambda_{2}^{k} \end{pmatrix}, 
\end{equation} 
with $|\lambda_{1}|,|\lambda_{2}| \leq  \tilde{\beta}^{1/2}$ and $d \leq k (2\!+\!2\tilde{\beta}\!+\!\tilde{\alpha}) \tilde{\beta}^{(k-1)/2}$. 
Using the shorthand $\tilde{c} \defeq (2\!+\!2\tilde{\beta}\!+\!\tilde{\alpha})$, we can estimate the 
spectral norm of $\mathbf{B}^{k}$ as 
\begin{align}
\label{equ_bond_B_k_111}
\| \mathbf{B}^{k} \| & \stackrel{\eqref{equ_power_B_k_B_i}}{=} \max_{i} \big\| \big( \mathbf{B}^{(i)} \big)^{k} \big\| \nonumber \\ 
& \stackrel{\eqref{equ_expr_B_i_bounding}}{\leq}  \bigg(\frac{\sqrt{\kappa}\!-\!1}{\sqrt{\kappa}\!+\!1} \bigg)^{k} \bigg(1\!+\!k \tilde{c} \frac{\sqrt{\kappa}\!+\!1}{\sqrt{\kappa}\!-\!1}\bigg). 
\end{align} 
Combining \eqref{equ_bond_B_k_111} with \eqref{equ_identiy_tilde_x_k_0}, 
\begin{align}
\label{error_bound_fixed_point_HB}
\| \tilde{\vx}^{(k)}\!-\!\tilde{\vx}_{0} \| & \!\leq\!  \bigg(\frac{\sqrt{\kappa}\!-\!1}{\sqrt{\kappa}\!+\!1} \bigg)^{k} \bigg(1\!+\!k \tilde{c} \frac{\sqrt{\kappa}\!+\!1}{\sqrt{\kappa}\!-\!1}\bigg) \|\tilde{\vx}^{(0)}\!-\!\tilde{\vx}_{0} \| \nonumber \\ 
& \!\stackrel{\tilde{c} \geq1}{\leq}\!  2\!k \tilde{c} \frac{\sqrt{\kappa}\!+\!1}{\sqrt{\kappa}\!-\!1} \bigg(\frac{\sqrt{\kappa}\!-\!1}{\sqrt{\kappa}\!+\!1} \bigg)^{k}  \|\tilde{\vx}^{(0)}\!-\!\tilde{\vx}_{0} \|. 
\end{align}
Using \eqref{equ_partition_HB}, the error bound \eqref{error_bound_fixed_point_HB} can be 
translated into an error bound on the HB iterates $\vx_{\rm HB}^{(k)}$ , i.e., 
\begin{align}
 & \| \vx_{\rm HB}^{(k)}\!-\! \vx_{0} \|  \!\leq\!  \nonumber \\
 & 4\!k \tilde{c} \frac{\sqrt{\kappa}\!+\!1}{\sqrt{\kappa}\!-\!1} \bigg(\frac{\sqrt{\kappa}\!-\!1}{\sqrt{\kappa}\!+\!1} \bigg)^{k} 
 (\|\vx_{\rm HB}^{(0)}\!-\! \vx_{0} \|\!+\!\|\vx_{\rm HB}^{(-1)}\!-\!\vx_{0} \|). \nonumber
\end{align}

\section{Technicalities}
\label{sec_tech}

We collect some elementary results from linear algebra and analysis, which are required 
to prove our main results. 
\begin{lemma}
\label{lem_power_M_decomp}
Consider a matrix $\mathbf{M}\!=\!\begin{pmatrix} a & b \\ 1 & 0 \end{pmatrix}\!\in\!\mathbb{R}^{2 \times 2}$ with 
spectral radius $\rho(\mathbf{M})$. 
Then, there is an orthonormal matrix $\mathbf{U}\!\in\!\mathbb{C}^{2 \times 2}$ such that
\begin{equation}
\label{power_M_decomp}
\mathbf{M}^{k} = \mathbf{U}  \begin{pmatrix} \lambda_{1}^{k}  & d \\ 0 & \lambda_{2}^{k} \end{pmatrix} \mathbf{U}^{H} \mbox{  for } k\!\in\!\mathbb{N},
\end{equation} 
where $|\lambda_{1}|,|\lambda_{2}|\!\leq\!\rho(\mathbf{M})$ and $|d|\!\leq\!k (|a|\!+\!|b|\!+\!1)\rho^{k-1}(\mathbf{M})$.
\end{lemma}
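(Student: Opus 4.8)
The plan is to use the Schur (or Jordan) decomposition of the $2 \times 2$ matrix $\mathbf{M}$ and then bound each entry of $\mathbf{M}^k$ explicitly. First I would write $\mathbf{M} = \mathbf{U}\mathbf{T}\mathbf{U}^{H}$ with $\mathbf{U}$ unitary and $\mathbf{T} = \begin{pmatrix} \lambda_{1} & t \\ 0 & \lambda_{2} \end{pmatrix}$ upper triangular (Schur form), where $\lambda_{1},\lambda_{2}$ are the eigenvalues of $\mathbf{M}$, so that trivially $|\lambda_{1}|,|\lambda_{2}| \leq \rho(\mathbf{M})$. Then $\mathbf{M}^{k} = \mathbf{U}\mathbf{T}^{k}\mathbf{U}^{H}$, and since $\mathbf{T}$ is triangular, $\mathbf{T}^{k}$ is again upper triangular of the form $\begin{pmatrix} \lambda_{1}^{k} & d \\ 0 & \lambda_{2}^{k}\end{pmatrix}$. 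This gives the structural claim \eqref{power_M_decomp} directly; the only real work is the bound on $|d|$.

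For the bound on $|d|$, I would split into two cases. If $\lambda_{1} \neq \lambda_{2}$, a direct induction (or the standard formula for powers of a triangular matrix) gives $d = t \sum_{j=0}^{k-1} \lambda_{1}^{j}\lambda_{2}^{k-1-j}$, so $|d| \leq |t| \, k \, \rho(\mathbf{M})^{k-1}$. If $\lambda_{1} = \lambda_{2} =: \lambda$, then $d = k\,t\,\lambda^{k-1}$, which satisfies the same bound. In both cases it remains to control $|t|$, the off-diagonal entry of the Schur form. Since the Schur decomposition preserves the Frobenius norm, $|\lambda_{1}|^{2} + |\lambda_{2}|^{2} + |t|^{2} = \|\mathbf{T}\|_{F}^{2} = \|\mathbf{M}\|_{F}^{2} = a^{2} + b^{2} + 1$, hence $|t|^{2} \leq a^{2}+b^{2}+1 \leq (|a|+|b|+1)^{2}$, giving $|t| \leq |a|+|b|+1$. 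Combining, $|d| \leq k(|a|+|b|+1)\rho(\mathbf{M})^{k-1}$, as claimed.

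The main obstacle — really the only nontrivial point — is getting a clean bound on the off-diagonal entry $|t|$ of the triangularized matrix; the Frobenius-norm invariance argument handles this cleanly and avoids any case analysis on whether $\mathbf{M}$ is diagonalizable. An alternative, if one prefers to stay with real matrices and the explicit structure $\mathbf{M} = \begin{pmatrix} a & b \\ 1 & 0\end{pmatrix}$, is to note that the $(2,1)$ entry of $\mathbf{M}^{k}$ equals the $(1,1)$ entry of $\mathbf{M}^{k-1}$ (because the second row of $\mathbf{M}$ is $(1,0)$), set up a scalar recursion $p_{k} = a p_{k-1} + b p_{k-2}$ for the entries, and bound its solution by $k\,(|a|+|b|+1)^{?}\rho(\mathbf{M})^{k-1}$ via induction; but the Schur-form route is shorter and is the one I would write up. A brief check of the edge cases $k=1$ (where $d = t$ and the bound reads $|t| \leq |a|+|b|+1$, consistent) completes the argument.
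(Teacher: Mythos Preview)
Your proposal is correct and follows essentially the same route as the paper: both use the Schur (upper-triangular) form $\mathbf{M}=\mathbf{U}\mathbf{T}\mathbf{U}^{H}$ and then control the off-diagonal of $\mathbf{T}^{k}$. The only cosmetic difference is in bounding the off-diagonal entry $t$ of $\mathbf{T}$: the paper reads off $t=\mathbf{u}^{H}\mathbf{M}\mathbf{v}$ and uses $|u_{i}|,|v_{i}|\le 1$ to get $|t|\le |a|+|b|+1$, whereas you obtain the same bound via Frobenius-norm invariance; your explicit formula $d=t\sum_{j=0}^{k-1}\lambda_{1}^{j}\lambda_{2}^{k-1-j}$ is also a cleaner replacement for the paper's ``by induction'' remark.
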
 
\begin{proof}
Consider an eigenvalue $\lambda_{1}$ of the matrix $\mathbf{M}$ with normalized eigenvector 
$\mathbf{u}\!=\!(u_{1},u_{2})^{H}$, i.e., $\mathbf{M} \vu\!=\!\lambda_{1} \vu$ with $\| \mathbf{u} \|\!=\!1$. 
According to \cite[Lemma 7.1.2]{golub96}, we can find a normalized vector $\vv\!=\!(v_{1},v_{2})^{H}$, 
orthogonal to $\vu$, such that 
\begin{equation}
\label{equ_decomp_M_111}
\mathbf{M}= (\vu,\vv)  \begin{pmatrix} \lambda_{1} & d \\ 0 & \lambda_{2} \end{pmatrix} (\vu,\vv)^{H}, 
\end{equation} 
or equivalently
\begin{equation} 
\label{equ_relation_decomp_M_123}
 \begin{pmatrix} \lambda_{1} & d \\ 0 & \lambda_{2} \end{pmatrix} = (\vu,\vv)^{H} \mathbf{M}  (\vu,\vv),
\end{equation} 
with some eigenvalue $\lambda_{2}$ of $\mathbf{M}$. 
As can be read off \eqref{equ_relation_decomp_M_123}, $d = u_{1} (u_{2} a + v_{2}b) + v_{1}u_{2}$ which implies \eqref{power_M_decomp} since 
$|u_{1}|,|u_{2}|,|v_{1}|,|v_{2}| \leq 1$.  
Based on \eqref{equ_decomp_M_111}, we can verify \eqref{power_M_decomp} by induction. 
\end{proof}

\begin{lemma}
\label{lem_identiy_integral}
For any $\kappa > 1$ and $k \in \mathbb{N}$, 
\begin{equation}
\label{equ_lemma_identiy}
\hspace*{-3mm}\int\limits_{\theta=0}^{1}  \hspace*{-2mm}\frac{\exp(j2\pi (k\!-\!1) \theta)}{ 2(1\!-\!\cos(2\pi \theta))\!+\!4/(\kappa\!-\!1)}  d \theta\!=\! \frac{\kappa\!-\!1}{4 \sqrt{\kappa}} \bigg(\frac{\sqrt{\kappa}\!-\!1}{\sqrt{\kappa}\!+\!1}\bigg)^{k}. 
\end{equation} 
\end{lemma}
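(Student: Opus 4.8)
The plan is to evaluate the integral by recognizing it as a contour integral over the unit circle and applying the residue theorem. First I would substitute $z = \exp(j 2\pi \theta)$, so that $dz = j 2\pi z \, d\theta$ and the integral over $\theta \in [0,1]$ becomes a contour integral over the unit circle $|z| = 1$ traversed once counterclockwise. Under this substitution, $\exp(j 2\pi (k-1)\theta) = z^{k-1}$ and $\cos(2\pi\theta) = (z + z^{-1})/2$, so the denominator $2(1 - \cos(2\pi\theta)) + 4/(\kappa-1)$ becomes $2 - z - z^{-1} + 4/(\kappa-1)$. Multiplying numerator and denominator by $z$ (and absorbing the $1/(j2\pi z)$ from $d\theta$) I would obtain an integrand of the form $\frac{-z^{k-1}}{2\pi \, ( z^2 - (2 + 4/(\kappa-1)) z + 1 )} \, dz$ up to constants; I would track the constants carefully.

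Next I would factor the quadratic $z^{2} - (2 + 4/(\kappa-1)) z + 1$ appearing in the denominator. Writing $2 + 4/(\kappa-1) = 2\kappa/(\kappa - 1) \cdot \frac{\kappa-1+2/\kappa \cdot \ldots}{}$ — more cleanly, I expect the two roots to be $r$ and $1/r$ (since the product of the roots is $1$), where $r = \big(\frac{\sqrt{\kappa}-1}{\sqrt{\kappa}+1}\big)^{?}$; a short computation should confirm $r + 1/r = 2 + 4/(\kappa-1)$ is solved by $r = \big(\frac{\sqrt{\kappa}-1}{\sqrt{\kappa}+1}\big)^{2}$, so that $|r| < 1$ lies inside the unit circle and $1/r$ lies outside. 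Hence only the pole at $z = r$ contributes. I would then compute the residue of $\frac{z^{k-1}}{(z-r)(z-1/r)}$ at $z = r$, which is $\frac{r^{k-1}}{r - 1/r}$, and multiply by $2\pi j$ (from the residue theorem) and the accumulated constants.

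The final step is bookkeeping: collecting the constant prefactors from the substitution and the factoring of the leading coefficient of the quadratic, and simplifying $\frac{r^{k-1}}{r - 1/r}$ with $r = \big(\frac{\sqrt{\kappa}-1}{\sqrt{\kappa}+1}\big)^{2}$ into the claimed closed form $\frac{\kappa-1}{4\sqrt{\kappa}} \big(\frac{\sqrt{\kappa}-1}{\sqrt{\kappa}+1}\big)^{k}$. The exponent should work out because $r^{k-1} \cdot r^{1/2}$-type adjustments from the $r - 1/r$ denominator shift the power from $2(k-1)$ down to $k$; I would verify the $k=1$ case separately as a sanity check, where the integral is $\int_0^1 \frac{d\theta}{2(1-\cos 2\pi\theta) + 4/(\kappa-1)}$ and the formula predicts $\frac{\kappa-1}{4\sqrt\kappa}\cdot\frac{\sqrt\kappa - 1}{\sqrt\kappa+1}$.

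I expect the main obstacle to be purely the constant-tracking in the algebra: correctly factoring out the leading coefficient of the quadratic (which is not monic in the natural form $2 - z - z^{-1} + 4/(\kappa-1)$, picking up a factor before the $z$-substitution clears denominators), and then confirming that the two roots are genuinely reciprocal with the smaller one equal to $\big(\frac{\sqrt\kappa - 1}{\sqrt\kappa + 1}\big)^2$. An alternative, possibly cleaner route avoiding complex analysis would be to expand $\frac{1}{2(1-\cos 2\pi\theta) + 4/(\kappa-1)}$ as a Fourier series in $\theta$ — this is a Poisson-kernel-type expression — and read off the $(k-1)$-th Fourier coefficient directly; the Poisson kernel $\sum_{m} r^{|m|} e^{j2\pi m\theta} = \frac{1 - r^2}{1 - 2r\cos 2\pi\theta + r^2}$ matches the denominator structure after identifying $r$, and this would give the coefficient $\propto r^{k-1}$ immediately.
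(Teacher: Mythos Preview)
Your approach is essentially the same as the paper's: substitute $z = \exp(j2\pi\theta)$, factor the resulting quadratic in $z$, and pick out the contribution from the root inside the unit circle. The paper phrases the last step via partial fractions and the elementary identities $\int_0^1 \frac{e^{j2\pi k\theta}}{e^{j2\pi\theta}-\alpha}\,d\theta = \alpha^{k-1}$ for $|\alpha|<1$ and $=0$ for $|\alpha|>1$, which is just the residue theorem in disguise; your Poisson-kernel alternative is also equivalent.

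There is one concrete slip: the roots of $z^{2} - \big(2 + \tfrac{4}{\kappa-1}\big)z + 1$ are \emph{not} $\big(\tfrac{\sqrt{\kappa}-1}{\sqrt{\kappa}+1}\big)^{2}$ and its reciprocal, but rather $z_{2} = \tfrac{\sqrt{\kappa}-1}{\sqrt{\kappa}+1}$ and $z_{1} = \tfrac{\sqrt{\kappa}+1}{\sqrt{\kappa}-1}$ themselves (check: $z_{1}+z_{2} = \tfrac{2(\kappa+1)}{\kappa-1} = 2 + \tfrac{4}{\kappa-1}$). With the correct root, the residue computation gives $\tfrac{z_{2}^{\,k-1}}{z_{1}-z_{2}} = \tfrac{\kappa-1}{4\sqrt{\kappa}}\,z_{2}^{\,k-1}$ directly, so the ``shift from $2(k-1)$ to $k$'' you anticipate is a red herring---there is no such adjustment to make.

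In fact, carrying out your proposed $k=1$ sanity check is worthwhile: the integral $\int_{0}^{1}\frac{d\theta}{2(1-\cos 2\pi\theta)+4/(\kappa-1)}$ equals $\tfrac{\kappa-1}{4\sqrt{\kappa}}$ (e.g.\ via the standard formula $\int_{0}^{2\pi}\tfrac{d\phi}{a+b\cos\phi}=\tfrac{2\pi}{\sqrt{a^{2}-b^{2}}}$), whereas the stated right-hand side with exponent $k$ gives $\tfrac{\kappa-1}{4\sqrt{\kappa}}\cdot\tfrac{\sqrt{\kappa}-1}{\sqrt{\kappa}+1}$. So the lemma's displayed exponent should read $k-1$ rather than $k$; the paper's own proof in fact arrives at $z_{2}^{\,k-1}/(z_{1}-z_{2})$ and the discrepancy is a typo that does not affect the downstream lower-bound argument.
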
 
\begin{proof}
Let us introduce the shorthand $z \defeq \exp(j 2 \pi \theta )$ and further develop the 
LHS of \eqref{equ_lemma_identiy} as 
\begin{align} 
\label{long_integral_1}
& \int\limits_{\theta=0}^{1} \frac{z^{k\!-\!1}}{2(1\!-\!(z^{-1}\!+\!z)/2)\!+\!4/(\kappa\!-\!1)} d \theta \nonumber \\ 
  & = \int\limits_{\theta=0}^{1} \frac{z^{k}}{2(z\!-\!(1\!+\!z^2)/2)\!+\!4z/(\kappa\!-\!1)} d \theta.  
\end{align}
The denominator of the integrand in \eqref{long_integral_1} can be factored as
\begin{equation}
\label{equ_facto_denom}
2(z\!-\!(1\!+\!z^2)/2)\!+\!4z/(\kappa\!-\!1) = -(z\!-\!z_{1})(z\!-\!z_{2}) 
\end{equation} 
with 
\begin{equation}
\label{equ_def_z_1_z_2}
z_{1} \defeq \frac{\sqrt{\kappa}\!+\!1}{\sqrt{\kappa}\!-\!1} \mbox{, and } z_{2} \defeq \frac{\sqrt{\kappa}\!-\!1}{\sqrt{\kappa}\!+\!1}.
\end{equation} 
Inserting \eqref{equ_facto_denom} into \eqref{long_integral_1}, 
\begin{align}
  &  \int\limits_{\theta=0}^{1} \frac{z^{k}}{2(z\!-\!(1\!+\!z^2)/2)\!+\!4/(\kappa\!-\!1)} d \theta  \nonumber \\ 
  & = - \int\limits_{\theta=0}^{1} \frac{z^{k}}{(z\!-\!z_{1})(z\!-\!z_{2})} d \theta  \nonumber \\ 
  & =  \int\limits_{\theta=0}^{1} -\frac{z^{k}(z_{1}\!-\!z_{2})^{-1}}{z\!-\!z_{1}}\!+\!\frac{z^{k}(z_{1}\!-\!z_{2})^{-1}}{z\!-\!z_{2}}  d \theta. \label{long_integral_2}
\end{align}
Since $|z_{2}| < 1$, we can develop the second term in \eqref{long_integral_2} by using the identity \cite[Sec. 2.7]{OppenheimSchaferBuck1998}
\begin{equation}
\label{equ_elem_ident_1}
\hspace*{-4mm}\int\limits_{\theta=0}^{1} \hspace*{-2mm}\frac{\exp(j 2 \pi k \theta)}{\exp(j2\pi \theta)\!-\!\alpha} d \theta\!=\!\alpha^{k-1}  \mbox{ for } k\!\in\!\mathbb{N}, \alpha\!\in\!\mathbb{R}, |\alpha|\!<\!1. 
\end{equation}
Since $|z_{1}| > 1$, we can develop the first term in \eqref{long_integral_2} by using the identity \cite[Sec. 2.7]{OppenheimSchaferBuck1998} 
\begin{equation}
\label{equ_elem_ident_2}
\hspace*{-4mm}\int\limits_{\theta=0}^{1} \hspace*{-2mm}\frac{\exp(j 2 \pi k \theta)}{\exp(j2\pi \theta)\!-\!\alpha} d \theta\!=\!0  \mbox{ for } k\!\in\!\mathbb{N}, \alpha\!\in\!\mathbb{R}, |\alpha|\!>\!1. 
\end{equation} 
Applying \eqref{equ_elem_ident_1} and \eqref{equ_elem_ident_2} to \eqref{long_integral_2}, 
\begin{equation}
\label{equ_long_int_111}
 \int\limits_{\theta=0}^{1} \frac{z^{k}}{2(z\!-\!(1\!+\!z^2)/2)\!+\!4/(\kappa\!-\!1)} d \theta\!=\! \frac{z_{2}^{k\!-\!1}}{z_{1}\!-\!z_{2}}. 
\end{equation}
Inserting \eqref{equ_long_int_111} into \eqref{long_integral_1}, we arrive at 
\begin{equation}
\label{equ_int_limits_theta_123}
\int\limits_{\theta=0}^{1}  \hspace*{-2mm}\frac{\exp(j2\pi (k\!-\!1) \theta)}{ 2(1\!-\!\cos(2\pi \theta))\!+\!4/(\kappa\!-\!1)}  d \theta = \frac{z_{2}^{k\!-\!1}}{z_{1}\!-\!z_{2}}. 
\end{equation}
The proof is finished by combining \eqref{equ_int_limits_theta_123} with the identity
\begin{equation}
\frac{1}{z_{1}\!-\!z_{2}} \stackrel{\eqref{equ_def_z_1_z_2}}{=} \frac{\sqrt{\kappa}\!+\!1}{\sqrt{\kappa}\!-\!1} - \frac{\sqrt{\kappa}\!-\!1}{\sqrt{\kappa}\!+\!1} = \frac{4 \sqrt{\kappa}}{\kappa\!-\!1}. \nonumber
\end{equation} 
\end{proof}



\bibliographystyle{abbrv}
\bibliography{CvxOptBib}

\end{document}